\def\eqref#1{equation~\ref{#1}}
\def\1{\bm{1}}
\DeclareMathAlphabet{\mathsfit}{\encodingdefault}{\sfdefault}{m}{sl}
\SetMathAlphabet{\mathsfit}{bold}{\encodingdefault}{\sfdefault}{bx}{n}
\newcommand{\R}{\mathbb{R}}
\newtheorem*{proposition*}{Proposition}
\title{Conformal Transformations for Symmetric Power Transformers}
\author{Saurabh Kumar\thanks{Denotes equal contribution.}$\text{ }$\thanks{Work completed during internship at Manifest AI. Correspondence to \texttt{szk@stanford.edu}.} \\
Stanford University \\
\And
Jacob Buckman$^*$ \\
Manifest AI \\
\And 
Carles Gelada \\
Manifest AI \\
\And
Sean Zhang \\
Manifest AI
}
\begin{document}

\maketitle

\begin{abstract}
Transformers with linear attention offer significant computational advantages over softmax-based transformers but often suffer from degraded performance. The symmetric power (sympow) transformer, a particular type of linear transformer, addresses some of this performance gap by leveraging symmetric tensor embeddings, achieving comparable performance to softmax transformers. However, the finite capacity of the recurrent state in sympow transformers limits their ability to retain information, leading to performance degradation when scaling the training or evaluation context length. To address this issue, we propose the \textit{conformal-sympow} transformer, which dynamically frees up capacity using data-dependent multiplicative gating and adaptively stores information using data-dependent rotary embeddings. Preliminary experiments on the LongCrawl64 dataset demonstrate that conformal-sympow overcomes the limitations of sympow transformers, achieving robust performance across scaled training and evaluation contexts.
\end{abstract}

\section{Introduction}

Transformers with softmax attention~\citep{vaswani2017attention} have computational cost that is quadratic in context length. A popular solution is to remove the exponential in the softmax, resulting in a linear attention~\citep{katharopoulos2020transformers,choromanski2020rethinking}. Transformers with linear attention admit a corresponding recurrent formulation enabling linear time inference and a chunked formulation with sub-quadratic training cost. However, while linear transformers enjoy practical speedups, they suffer from degraded performance relative to softmax transformers~\citep{kasai2021finetuning}.

To bridge the performance gap with softmax transformers, recent work has proposed a variant of linear transformers called symmetric power (sympow) transformers~\citep{buckman2024sympow}. Sympow transformers embed queries and keys using an embedding function based on the theory of symmetric tensors. \citet{buckman2024sympow} demonstrates that sympow achieves comparable performance to a softmax transformer baseline while maintaining a tractably small recurrent state size. 

While the recurrent formulation enables efficient training and inference, a linear transformer's recurrent state is fundamentally constrained by its finite-dimensional representation. Additionally, in the attention formulation, a linear attention mechanism limits the class of attention score distributions, typically favoring more diffuse distributions relative to softmax attention. As a result, linear transformers may struggle in tasks that require synthesizing information from long contexts. Our experiments confirm that sympow transformers exhibit degraded performance both when increasing the training context and when evaluating on contexts longer than those seen during training.

In this paper, we introduce mechanisms that enable symmetric power transformers to manage their constrained capacity more effectively.
We first apply data-dependent multiplicative gating developed in prior work~\citep{dao2024transformers} which erases information in the recurrent state, freeing up capacity for new information. 
We then introduce a novel approach for learning data-dependent rotary embeddings which iteratively applies dynamically chosen rotations to the recurrent state. 
Data-dependent rotations enable the model to adaptively determine where to store information in embedding space. 
The combination of data-dependent gating and rotary embeddings forms a learned \textit{conformal linear transformation} to the recurrent state. We refer to the resulting conformal symmetric power transformer as \textit{conformal-sympow}. Our experiments on the LongCrawl64 dataset~\citep{buckman2024longcrawl} demonstrate that conformal-sympow overcomes the limitations of sympow transformers, achieving robust performance across scaled training and evaluation contexts.

\section{Background}\label{sec:sympow_detailed}
In this section, we provide a review of symmetric power transformers~\citep{buckman2024sympow}. We also review rotary embeddings~\citep{su2024roformer} which are an essential component of the conformal transformations discussed in this paper. We demonstrate that rotary embeddings are compatible with sympow transformers. 

\subsection{Symmetric Power Transformers}
In a linear transformer, the exponential in a softmax transformer is replaced with a kernel function along with an associated feature map. A symmetric power transformer is a particular type of linear transformer which uses kernel function \textbf{k}$(v, w) = (v^Tw)^p$ where $p$ is referred to as the symmetric power. The corresponding feature map $\phi^p : \R^d \to \R^D$ satisfies the following: for input $v \in \mathbb{R}^d$, $\phi(v) \in \mathbb{R}^D$ contains the same information as ${v\otimes \cdots \otimes v} \in \mathbb{R}^{d^p}$, repeatedly taking the tensor product $p$ times. It does so much more efficiently because it removes a lot of symmetry in the tensor product (hence the name symmetric power). Thus $D << d^p$.

In a sympow transformer, the inputs to an attention layer are sequences of $Q_i, K_i, V_i \in \R^d$ of queries, keys, and values, where $i$ ranges from $1$ to the sequence length $t$. The outputs are a sequence $Y_i\in \R^d$. In the \textit{attention formulation} of a sympow transformer with power $p$, the formula for the output vectors is:
$$
Y_i = \sum_{j=1}^i A_{ij} V_j \qquad A_{ij}  = \frac{B_{ij}}{\sum_{k=1}^i B_{ik}} \qquad B_{ij} = (Q_i^T K_j)^p
 \qquad \text{(sympow)}
$$
We refer to $A_{ij}$ as the attention scores and $B_{ij}$ as the pre-attention scores. 

In practice, it is important that the symmetric power $p$ is even because that guarantees that each $B_{ij}$ is non-negative, which makes the set of attention scores $A_{i1}, \cdots, A_{ii}$ a valid probability distribution. In turn, this makes the outputs $Y_i$ a convex combination of the value vectors $V_1, \cdots, V_i$.

A key feature of linear transformers is that the exact same outputs $Y_i$ can be computed via a \textit{recurrent formulation}. In the case of sympow transformers, doing so involves the feature map $\phi^p : \R^d \to \R^D$, which is the symmetric power embedding function. 
Using this embedding function, we can write the recurrent equations:
$$
Y_{i} = \frac{S_i \phi^p(Q_i)}{Z_i \phi^p(Q_i)} \qquad Z_i = Z_{i-1} + \phi^p(K_i)^T \qquad S_i = S_{i-1} + V_i \phi^p(K_i)^T
$$
where $Z_0 \in \R^{1 \times D}$ and $S_0 \in \R^{d \times D}$ are $0$ vectors in their respective spaces, and the tuple $(S_i, Z_i) \in \R^{(d+1) \times D}$ is the recurrent state that the sympow transformer stores, allowing for linear time inference. 

The equivalence between the attention and recurrent formulations arises from the fact that the embedding function $\phi^p$ satisfies the following property: for any two vectors $v, w \in \R^d$, $\phi^p(v)^T \phi^p(w) = (v^T w)^p$. The attention and recurrent forms give rise to a variety of algorithms for training linear transformers, which allow for subquadratic training cost and linear time inference.

\subsection{Compatibility of Rotary Embeddings with Sympow}\label{sec:rotary_compatibility}

Rotary embeddings~\citep{su2024roformer} are a type of positional encoding which encode time information by rotating the keys and queries by an amount proportional to their corresponding timestep. In particular, the rotation matrix $R\in \R^{d\times d}$ tells us how much we want to rotate every timestep, so that:
$$
Q'_{i} = R^i Q_i \qquad K'_j = R^j K_j
$$

We now show that rotary embeddings are compatible with sympow transformers. Specifically, after applying rotary embeddings, the attention scores remain a distribution. Further, the attention formulation with rotary embeddings has an equivalent recurrent formulation.

In sympow transformers, the pre-attention is changed to:
$$
B_{ij} = \left({Q'_{i}}^T K'_j \right)^p = \left({Q_{i}}^T (R^{i-j})^T K_j \right)^p
\qquad \text{(sympow rotary)}
$$
Importantly, rotary embeddings are compatible with the attention formulation when $p$ is even: each $B_{ij}$ is non-negative which makes $A_{i1}, ..., A_{ii}$ a valid probability distribution.

It is evident that the effect of rotation of the embeddings is \textit{relative} because it modulates interaction between $Q_i$ and $K_j$ depending only on the time difference $i-j$.

The rotation matrix $R$ is constructed in a particular way. We start with rotation rates $\theta_1, \theta_2, \cdots, \theta_{\frac d 2}$ distributed in the range $(0, 2\pi)$. Specifically, $\theta_i = \frac{2\pi}{N^{\frac{2(i-1)}{d}}}$, where $N$ is the maximum document size. The vector $\theta = (\theta_1, \theta_2, ..., \theta_{\frac{d}{2}})$ contains these rotation rates. Then, the rotation matrix is the following block diagonal matrix:
\[
R(\theta) = \begin{pmatrix}
\cos(\theta_1) & -\sin(\theta_1) & \cdots & 0 & 0 \\
\sin(\theta_1) & \cos(\theta_1)  & \cdots & 0 & 0 \\
\vdots & \vdots & \ddots & \vdots & \vdots \\
0 & 0 & \cdots & \cos(\theta_{d/2}) & -\sin(\theta_{d/2}) \\
0 & 0 & \cdots & \sin(\theta_{d/2}) & \cos(\theta_{d/2})
\end{pmatrix},
\]

When multiplying a query or key vector by this rotation matrix, each pair of dimensions indexed by $2j - 1$ and $2j$ for $j \in \{1, 2, ..., \frac{d}{2} \}$ is rotated by a different amount $\theta_j$. Rotating each pair by a different angle helps break symmetry and increases the expressiveness of the positional encodings.

A computational advantage of using rotation matrices of this form is that there is an efficient way to compute $R(\theta)^k = R(k \theta)$, which massively simplifies the cost of computing all the $Q'_i$ and $K'_j$. We include a derivation of this fact in Appendix~\ref{app:derivations}.

Now we want to find the recurrent formulation of rotary embeddings with sympow transformers. A simple way we can do that is by including one extra vector in the recurrent state which is now a tuple $(S, Z, \mu)$, where $\mu \in \R^{\frac d 2}$. The recurrent equations are given by

$$
Z_i = Z_{i-1} + \phi^p(R(\mu_i) K_i)^T \qquad S_i = S_{i-1} + V_i \phi^p(R(\mu_i) K_i)^T \qquad \mu_i = \mu_{i-1} + \theta
$$

Note we rotate the keys by $R(\mu_i)$ before using them.

Given $S_i$ and $Z_i$, the outputs are the same as before, except that we rotate the queries by $R(\mu)$ before using them:
$$
Y_{i} = \frac{S_i \phi^p( R(\mu_i) Q_i)}{Z_i \phi^p( R(\mu_i) Q_i)} 
$$

\begin{restatable}{proposition}{rotaryequivalence}\label{prop:rotaryequivalence}
When using rotary embeddings with sympow transformers, the attention formulation of the output $Y_i$ at time step $i$ is equivalent to its recurrent formulation. Specifically,
\[
\begin{alignedat}{2}
&Y_i = \sum_{j=1}^i \frac{(Q_i'^T K_j')^p V_j}{\sum_{k=1}^i (Q_i'^T K_k')^p} \quad
\text{ is equivalent to }
\quad
&&Y_i = \frac{S_i \phi^p(R(\mu_i) Q_i)}{Z_i \phi^p(R(\mu_i) Q_i)}.
\end{alignedat}
\]
\end{restatable}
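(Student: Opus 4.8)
The plan is to show that the recurrent expression collapses exactly onto the attention sum by unrolling the three recurrences and then invoking the defining property of the symmetric power embedding, $\phi^p(v)^T\phi^p(w) = (v^Tw)^p$. Everything reduces to careful bookkeeping of the outer-product structure of the state $S_i$ together with the identity $R(\theta)^k = R(k\theta)$ established earlier.

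First I would unroll the recurrences. With the natural initialization $\mu_0 = 0$, the relation $\mu_i = \mu_{i-1} + \theta$ gives $\mu_i = i\theta$ by induction, and hence, using $R(\theta)^k = R(k\theta)$, we get $R(\mu_i) = R(i\theta) = R^i$. Consequently $R(\mu_i)Q_i = R^iQ_i = Q_i'$ and $R(\mu_j)K_j = R^jK_j = K_j'$, so the rotated queries and keys fed into $\phi^p$ in the recurrent form are exactly the rotary-embedded $Q_i'$ and $K_j'$ of the attention form. Unrolling $Z_i = Z_{i-1} + \phi^p(R(\mu_i)K_i)^T$ from $Z_0 = 0$ yields $Z_i = \sum_{j=1}^i \phi^p(K_j')^T$, and unrolling $S_i = S_{i-1} + V_i\phi^p(R(\mu_i)K_i)^T$ from $S_0 = 0$ yields $S_i = \sum_{j=1}^i V_j\phi^p(K_j')^T$.

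Next I would substitute these closed forms into $Y_i = S_i\phi^p(R(\mu_i)Q_i)\big/\big(Z_i\phi^p(R(\mu_i)Q_i)\big)$ and handle numerator and denominator separately. For the numerator, $S_i\phi^p(Q_i') = \sum_{j=1}^i \big(V_j\phi^p(K_j')^T\big)\phi^p(Q_i')$; using associativity of matrix multiplication I regroup each rank-one term as $V_j\big(\phi^p(K_j')^T\phi^p(Q_i')\big)$, a value vector scaled by a scalar. The kernel property converts that scalar into $(Q_i'^TK_j')^p$, giving $\sum_{j=1}^i (Q_i'^TK_j')^p V_j$. An identical computation for the denominator gives $Z_i\phi^p(Q_i') = \sum_{k=1}^i (Q_i'^TK_k')^p$, and dividing recovers the attention expression exactly.

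The main obstacle is dimensional rather than conceptual: one must track that $S_i \in \R^{d\times D}$ is a sum of rank-one outer products $V_j\phi^p(K_j')^T$ with $V_j \in \R^d$ and $\phi^p(K_j')^T \in \R^{1\times D}$, so that right-multiplication by $\phi^p(Q_i') \in \R^D$ and the regrouping $\big(V_j\phi^p(K_j')^T\big)\phi^p(Q_i') = V_j\big(\phi^p(K_j')^T\phi^p(Q_i')\big)$ are valid and yield a vector in $\R^d$. The analogous check for the scalar denominator (where $Z_i \in \R^{1\times D}$), and confirming that the single kernel property suffices to rewrite both sums, are the only points requiring care; once these are in place the equivalence is immediate.
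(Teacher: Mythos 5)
Your proof is correct and follows essentially the same argument as the paper's: both rest on the kernel identity $\phi^p(v)^T\phi^p(w)=(v^Tw)^p$, the relation $R(\mu_i)=R(i\theta)=R^i$, and the unrolled sums $S_i=\sum_j V_j\phi^p(K_j')^T$ and $Z_i=\sum_j\phi^p(K_j')^T$, with the same rank-one regrouping of numerator and denominator. The only difference is direction of traversal --- you unroll the recurrence and recover the attention sum, while the paper rewrites the attention sum into the recurrent form --- which is the same chain of equalities read backwards.
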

The proof is in Appendix~\ref{app:derivations}.

\citet{buckman2024sympow} demonstrates that sympow achieves comparable performance to a softmax transformer baseline while maintaining a tractably small recurrent state size. However, our experiments in Section~\ref{sec:experiments} show that sympow's performance degrades relative to a softmax transformer at longer context lengths. In this paper, we propose mechanisms to help sympow better manage its limited state capacity, which we hypothesize is a key factor behind this performance degradation.

\section{Conformal Transformations}\label{sec:conformaltransformations}

In this section, we propose learning conformal transformations to the sympow transformer's recurrent state in order to better manage its limited capacity. A \textit{conformal linear transformation} is a type of linear transformation that preserves angles between vectors while allowing uniform scaling of lengths. Mathematically, a conformal linear transformation in \( n \)-dimensional Euclidean space can be expressed as

$$\mathbf{T}(\mathbf{x}) = s \mathbf{R} \mathbf{x}$$

where \( s > 0 \) is a scalar representing the scaling factor, \( \mathbf{R} \) is an orthogonal matrix (\( \mathbf{R}^T \mathbf{R} = \mathbf{I} \)), and \( \mathbf{x} \) is the input vector.

To update the recurrent state of a sympow transformer, we right multiply the state with a conformal transformation before adding new information:

\begin{equation}\label{eq:conformal_state_update}
S_{i} = S_{i-1} (s\mathbf{R}) + V_i \phi^p(K_i)^T
\end{equation}

In this paper, we consider conformal transformations for which the scalar satisfies $s < 1$ and the matrix $\mathbf{R}$ is a rotation matrix, leveraging the benefits of gating and rotary embeddings. Gating (multiplying the state by a scalar between $0$ and $1$), serves to erase information that is no longer needed and applying rotations serves to store information in the state more efficiently. In Section~\ref{sec:gating}, we apply data-dependent gating developed in prior work to sympow transformers. In Section~\ref{sec:rotary}, we introduce a novel approach to learn data-dependent rotations. We unify these concepts to form the conformal-sympow transformer in Section~\ref{sec:conformalsympow}.

\subsection{Data Dependent Gating}\label{sec:gating}

The basic idea of gating is that at each time step, the state matrix $S\in \R^{d \times D}$ will be discounted by a scalar $\gamma \in [0, 1]$. Discounting the state ``erases" past information stored in the state. This technique has been used extensively throughout the linear transformer literature~\citep{peng2021random,mao2022fine,katsch2023gateloop}. One common approach to implement gating is to pick a fixed gating value for each head, usually using a range of large and small $\gamma$ for different heads to allow the model to keep track of short and long term interactions. The gating values can also be learnable parameters or even data-dependent values, as has been thoroughly explored in prior work~\citep{dao2024transformers,sun2024you,gu2023mamba,yang2023gated}. 

In this paper, we apply the technique proposed in ~\citet{dao2024transformers} and used in \citet{peng2021random}, \citet{sun2024you}, and \citet{beck2024xlstm} to symmetric power transformers. Scalar discount values for gating are computed in a data-dependent manner using parameters $W_\gamma$ in each attention head. The discount value at timestep $i$ is $\gamma_i = \sigma(W_\gamma X_i)$ where $\sigma$ refers to the sigmoid function, $W_\gamma \in \mathbb{R}^{d \times 1}$ and $X_1, X_2, ..., X_i, ...$ is the input sequence from which the keys, queries, and values are computed (e.g. $K_i = W_K X_i$). When using symmetric power attention with power $p$, the recurrent state update is simply
$$
Z_i = \gamma_i Z_{i-1} + \phi^p(K'_i)^T \qquad S_i = \gamma_i S_{i-1} + V_i \phi^p(K'_i)^T \qquad \mu_i = \mu_{i-1} + \theta
$$

Recall that $K'_i = R^i K_i = R(\mu_i) K_i$ uses the notation for computing and applying rotary positional embeddings introduced in Section~\ref{sec:rotary_compatibility}. $R(\theta)$ (shorthand $R$) is the rotation matrix used to compute rotary positional embeddings at each time step using a vector of rotation rates $\theta$. The rotation applied to keys at position $i$ is $R^i$ which is equivalent to $R(\theta)^i = R(i\theta) = R(\mu_i)$.

To write the attention formulation we define $b_{ij} = \Pi_{k=j+1}^i \gamma_m$. Then, in the attention formulation, the preattention scores become
$$
B_{ij} = b_{ij} \; ( {Q'_i}^T K'_j)^p
\qquad \text{(sympow+gating)}
$$

\begin{restatable}{proposition}{gatingequivalence}\label{prop:gatingequivalence}
When applying scalar gating to sympow transformers, the attention formulation of the output $Y_i$ at time step $i$ is equivalent to its recurrent formulation. Specifically,
$$
Y_i = \sum_{j=1}^i \frac{b_{ij} \left(Q_i'^T K'_j\right)^p V_j}{\sum_{k=1}^i b_{ik} (Q_i'^T K_k')^p} \qquad \text{is equivalent to}  \qquad Y_i = \frac{S_i  \phi^p(Q'_i)}{Z_i \phi^p(Q'_i)}
$$
\end{restatable}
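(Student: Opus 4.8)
The plan is to prove the equivalence by unrolling the gated recurrences for the state variables $S_i$ and $Z_i$ into closed form, substituting the result into the recurrent output expression, and then collapsing the inner products using the defining kernel property $\phi^p(v)^T\phi^p(w) = (v^Tw)^p$. This mirrors the argument behind Proposition~\ref{prop:rotaryequivalence}, with the gating coefficients $b_{ij}$ as the only genuinely new ingredient; since the rotated variables $Q'_i$ and $K'_i$ already appear throughout the statement, no separate treatment of the rotary embeddings is needed beyond what that proposition supplies.

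First I would establish by induction on $i$ that the recurrences $S_i = \gamma_i S_{i-1} + V_i \phi^p(K'_i)^T$ and $Z_i = \gamma_i Z_{i-1} + \phi^p(K'_i)^T$, started from $S_0 = 0$ and $Z_0 = 0$, unroll to
\[
S_i = \sum_{j=1}^i b_{ij}\, V_j\, \phi^p(K'_j)^T, \qquad Z_i = \sum_{j=1}^i b_{ij}\, \phi^p(K'_j)^T,
\]
where $b_{ij} = \prod_{k=j+1}^i \gamma_k$, with the empty product giving $b_{ii} = 1$. The base case $i=1$ yields $S_1 = V_1 \phi^p(K'_1)^T$, matching $b_{11}=1$. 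For the inductive step, multiplying the accumulated sum at step $i-1$ by $\gamma_i$ turns each coefficient $b_{i-1,j}$ into $\gamma_i b_{i-1,j} = b_{ij}$, while the freshly added term $V_i \phi^p(K'_i)^T$ supplies the $j=i$ summand with coefficient $b_{ii}=1$.

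Next I would substitute these closed forms into the recurrent output $Y_i = S_i \phi^p(Q'_i) / \bigl(Z_i \phi^p(Q'_i)\bigr)$, pulling the scalar coefficients $b_{ij}$ outside the matrix-vector products so that the numerator and denominator become $\sum_{j=1}^i b_{ij}\, V_j\, \phi^p(K'_j)^T \phi^p(Q'_i)$ and $\sum_{j=1}^i b_{ij}\, \phi^p(K'_j)^T \phi^p(Q'_i)$ respectively. Applying the kernel identity $\phi^p(K'_j)^T \phi^p(Q'_i) = (Q_i'^T K'_j)^p$ to each summand then produces exactly the attention-formulation numerator and denominator, completing the equivalence.

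The only step demanding care is the induction, and specifically tracking that the gate multiplies the \emph{entire} accumulated state: every previously stored summand picks up the extra factor $\gamma_i$, which is precisely what extends each product $b_{i-1,j}$ to $b_{ij} = \gamma_i b_{i-1,j}$. Everything downstream is routine rewriting, since the rotary structure is already absorbed into $Q'_i$ and $K'_i$ and the kernel property is assumed.
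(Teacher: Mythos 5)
Your proposal is correct and follows essentially the same route as the paper's proof: the core of both arguments is the induction establishing the closed form $S_i = \sum_{j=1}^i b_{ij} V_j \phi^p(K'_j)^T$ (and analogously for $Z_i$), combined with the kernel identity $\phi^p(v)^T\phi^p(w) = (v^Tw)^p$. The only difference is presentational --- you unroll the recurrence and substitute into the recurrent output, whereas the paper starts from the attention formulation and rearranges it into the recurrent one --- but the key lemma and all the computations are identical.
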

The proof is in Appendix~\ref{app:derivations}.

\subsection{Data Dependent Rotations}\label{sec:rotary}

We now introduce an approach to learn rotation rates in rotary positional embeddings. For a review of rotary embeddings and their compatibility with sympow transformers, see Section~\ref{sec:rotary_compatibility}. 

There are many possible ways to learn rotation rates. For example, the model could independently decide how much to rotate each pair of dimensions in the query or key vector. In this paper, we start with the fixed rotation rates $\theta_1, \theta_2, ..., \theta_{\frac{d}{2}}$ where $\theta_i = \frac{2\pi}{N^{\frac{2(i-1)}{d}}}$, and we equip the model with the ability to uniformly scale the rotation rates. Similar to the data-dependent gating approach, we add parameters $W_{\beta}$ to each attention layer. At each time step $i$, the attention layer outputs a scalar $\beta_i = 1 + \text{tanh}(W_\beta X_i)$ which it uses to scale the fixed vector $\theta$ that rotary embeddings typically apply. Recall that $X_1, X_2, ..., X_i, ...$ is the input sequence, so the scalar $\beta_i$ is data dependent. In the recurrent formulation, this produces the equation
$$
\mu_i = \mu_{i-1} + \beta_i \theta \quad \text{ where } \quad \beta_i = 1 + \text{tanh}(W_\beta X_i)
$$

Intuitively, since the range of $1 + \text{tanh}(\cdot)$ is $(0, 2)$, the model can decide to either speed up the rotation rates by up to $2$ times or reduce the rotation rates until effectively zero rotation is applied. While learned gating affects how information in the state is erased, learned rotations affect where information gets stored in embedding space. 

To write the attention formulation we define $c_{ij} = \sum_{k=j+1}^i \beta_i$. 
Then, in the attention formulation, the preattention scores become
$$
B_{ij} = \left(Q_i R(c_{ij}\theta) K_j \right)^p \qquad \text{(sympow learned rotary)}
$$

\subsection{Conformal Sympow}\label{sec:conformalsympow}

We combine data dependent gating and data dependent rotary embeddings into the following manner of computing preattention scores:
$$
B_{ij} = b_{ij} \left(Q_i R(c_{ij}\theta) K_j \right)^p \qquad \text{(conformal-sympow)}
$$

The corresponding recurrent state update is
\begin{equation}\label{eq:conformal_in_practice}
Z_i = \gamma_i Z_{i-1} + \phi^p(R(\mu_i) K_i)^T \qquad S_i = \gamma_i S_{i-1} + V_i \phi^p(R(\mu_i) K_i)^T \qquad \mu_i = \mu_{i-1} + \beta_i \theta
\end{equation}

As written, the recurrent state update above does not match the form of a conformal transformation applied to the state as in Equation~\ref{eq:conformal_state_update}. While in Equation~\ref{eq:conformal_state_update}, the state is right multiplied by a rotation matrix, in the above update in Equation~\ref{eq:conformal_in_practice}, the rotation appears when transforming the key vectors $K'_i = R(\mu_i)K_i$. There is an equivalent form to the above state update which does right multiply the state by a conformal transformation. The construction of this form uses the following result.

\begin{restatable}{proposition}{conformalequivalence}\label{prop:conformalequivalence}
For any vector $k \in \mathbb{R}^d$, if $P\in \R^{d\times d}$ is a rotation matrix, then there exists another rotation matrix $\bar P \in \R^{D \times D}$ s.t.
$$
\phi^p( P k) = \bar P \phi^p(k)
$$
\end{restatable}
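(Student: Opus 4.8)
The plan is to realize $\phi^p$ as an isometric copy of the symmetric tensor-power map $v \mapsto v^{\otimes p}$, and then take $\bar P$ to be the restriction of $P^{\otimes p} = P \otimes \cdots \otimes P$ to the symmetric subspace, transported across this isometry. The point is that $P^{\otimes p}$ is the correct lift of $P$ to the tensor power: it is orthogonal, it preserves the symmetric subspace, and it intertwines the embedding, since $(Pv)^{\otimes p} = P^{\otimes p}(v^{\otimes p})$.

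Concretely, I would proceed as follows. First, recall the standard identity $\langle v^{\otimes p}, w^{\otimes p}\rangle = \langle v, w\rangle^p$, so that the tensor-power map carries exactly the same Gram data as $\phi^p$, namely $\langle \phi^p(v), \phi^p(w)\rangle = (v^T w)^p$. Since the vectors $\{v^{\otimes p}\}$ span the symmetric subspace $\mathrm{Sym}^p(\R^d)$ and, by the definition of the sympow embedding, the vectors $\{\phi^p(v)\}$ span $\R^D$, the matching Gram matrices let me define a linear isometry $U : \mathrm{Sym}^p(\R^d) \to \R^D$ with $U(v^{\otimes p}) = \phi^p(v)$; well-definedness and norm-preservation both follow from the fact that any linear combination $\sum_i c_i v_i^{\otimes p}$ has squared norm $\sum_{i,j} c_i c_j (v_i^T v_j)^p$, which is simultaneously the squared norm of $\sum_i c_i \phi^p(v_i)$. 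Second, observe that $P^{\otimes p}$ is orthogonal on $(\R^d)^{\otimes p}$ and commutes with every permutation of the tensor factors, hence preserves $\mathrm{Sym}^p(\R^d)$; let $P^{\otimes p}_{\mathrm{sym}}$ denote its restriction. Setting $\bar P := U\, P^{\otimes p}_{\mathrm{sym}}\, U^{-1}$ gives an orthogonal map on $\R^D$, and the desired identity is a one-line computation: $\bar P\,\phi^p(k) = U P^{\otimes p}(k^{\otimes p}) = U\big((Pk)^{\otimes p}\big) = \phi^p(Pk)$. Finally, to upgrade orthogonal to rotation ($\det = +1$), I would use that $P$ is a genuine rotation: since $SO(d)$ is path-connected and $P \mapsto \bar P$ is continuous with $I \mapsto I_D$, the image lands in the identity component $SO(D)$.

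The step I expect to be the main obstacle is pinning down the isometry $U$ rigorously, in particular verifying that the image of $\phi^p$ spans $\R^D$ (so that $U$ is surjective and $U^{-1}$ is defined on all of $\R^D$) and that $\dim \mathrm{Sym}^p(\R^d) = D$, so the identification is a genuine isomorphism rather than merely an isometric inclusion. If one prefers to avoid tensor machinery altogether, the same $\bar P$ can be built directly: choose $v_1, \dots, v_D$ with $\{\phi^p(v_i)\}$ a basis of $\R^D$, define $\bar P$ by $\bar P\,\phi^p(v_i) = \phi^p(Pv_i)$, check orthogonality from $\langle \phi^p(Pv_i), \phi^p(Pv_j)\rangle = (v_i^T v_j)^p = \langle \phi^p(v_i), \phi^p(v_j)\rangle$, and then extend the identity to all $k$ by noting that $\langle \bar P\phi^p(k) - \phi^p(Pk),\, \phi^p(Pv_i)\rangle = 0$ for every $i$ while $\{\phi^p(Pv_i)\}$ is again a basis. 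Both routes hinge on the spanning property of $\phi^p$ and on its defining inner-product identity.
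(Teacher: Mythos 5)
Your proposal is correct, and its core is the same lift the paper uses: $\bar P$ is (a copy of) $P^{\otimes p}$, orthogonality follows from factorwise preservation of inner products, and the intertwining identity from $(Pk)^{\otimes p} = P^{\otimes p}\,k^{\otimes p}$. Where you genuinely diverge is in how the result on the tensor power is transported to the actual embedding space $\R^D$: the paper declares that it suffices to prove the statement for the uncompressed map $\otimes^p$ ``for mathematical simplicity'' and never constructs the bridge, whereas you build it explicitly---the isometry $U:\mathrm{Sym}^p(\R^d)\to\R^D$ defined on the spanning set $\{v^{\otimes p}\}$, with well-definedness and invertibility coming from the matching Gram data. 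This fills the one real gap in the paper's argument. The spanning property you flag as the main obstacle does hold for the sympow embedding (there $D=\binom{d+p-1}{p}$, and the powers $v^{\otimes p}$ span $\mathrm{Sym}^p(\R^d)$ by polarization); and even if the image of $\phi^p$ failed to span $\R^D$, you could define $\bar P$ on the span and extend it by the identity on the orthogonal complement, so nothing in your plan breaks. Your determinant step also differs, to your advantage: the paper invokes $\det(A\otimes B)=\det(A)^{\dim B}\det(B)^{\dim A}$ and writes $\det(\bar P)=\det(P)^{pd}$, which both misstates the exponent (it should be $p\,d^{p-1}$; harmless only because $\det P=1$) and computes the determinant of the $d^p\times d^p$ matrix rather than of the $D\times D$ restriction actually asserted in the proposition. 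Your connectedness argument---$P\mapsto\bar P$ is continuous, $SO(d)$ is path-connected, and $I\mapsto I_D$---lands directly in $SO(D)$ and is both correct and cleaner. The elementary basis-based alternative you sketch at the end is also sound, provided you reuse the same connectedness argument to get $\det\bar P=1$ there as well.
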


The proof is in Appendix~\ref{app:derivations}. Using the above result, the recurrent state update can be written as follows:

\begin{equation}\label{eq:conformal_in_theory}
Z_i = Z_{i-1} (\gamma_i \bar{R}(\theta, \beta_i)) + \phi^p(K_i)^T \qquad S_i = S_{i-1} (\gamma_i \bar{R}(\theta, \beta_i)) + V_i \phi^p(K_i)^T 
\end{equation}
where $\bar{R}(\theta, \beta_i)$ is a rotation matrix that depends on the fixed rotation rates $\theta$ and the scalar $\beta_i$. 

In practice, the recurrent state update (\ref{eq:conformal_in_practice}) is more straightforward to implement, but it is equivalent to the conformal recurrent state update (\ref{eq:conformal_in_theory}). We now have the conformal-sympow transformer in both attention and recurrent formulations.

\section{Experiments}\label{sec:experiments}

\begin{figure}[t] %
    \centering
    \begin{subfigure}[b]{0.45\textwidth}
        \centering
        \includegraphics[width=\textwidth]{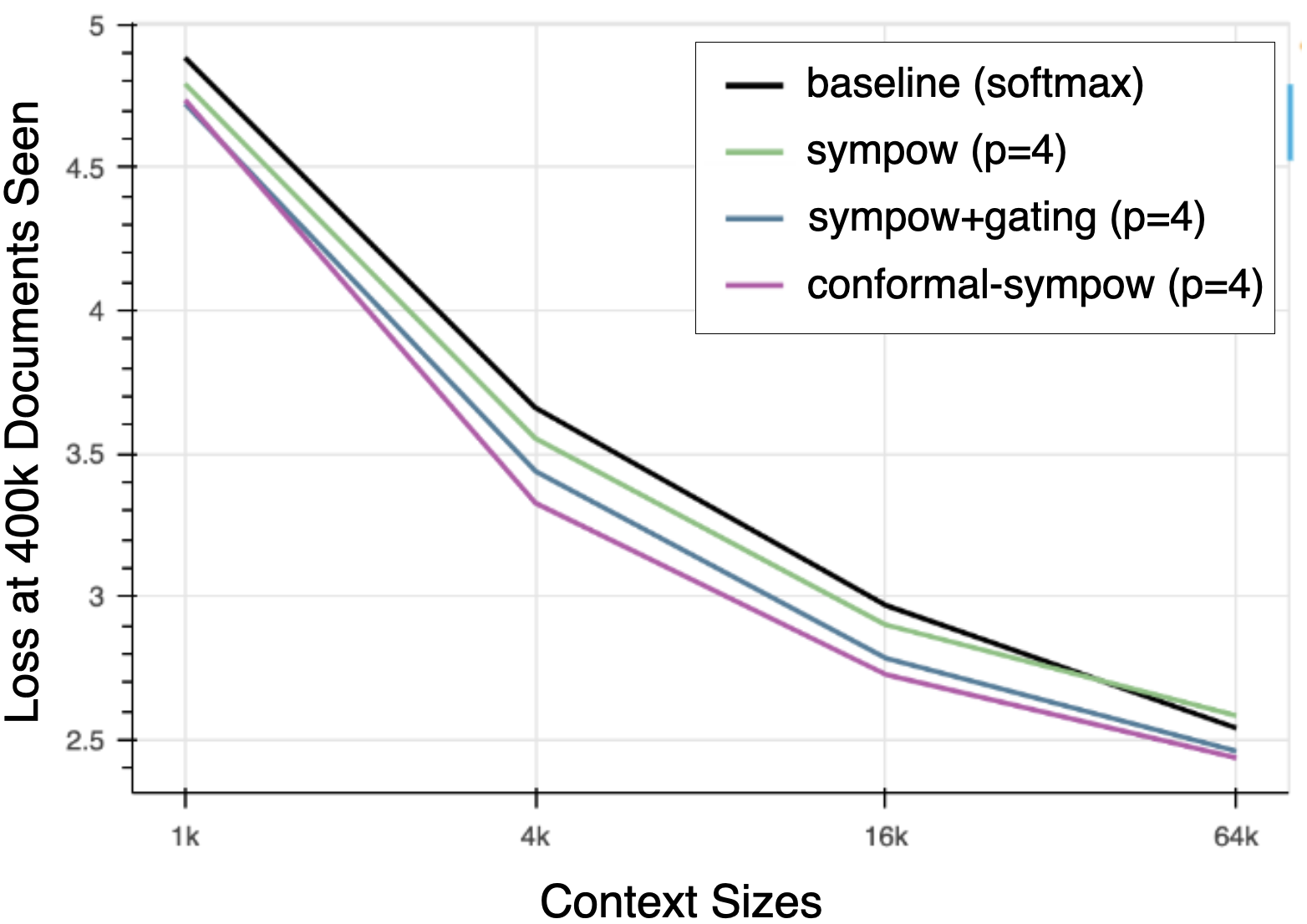}
        \caption*{(a) $p=4$}
    \end{subfigure}
    \hfill
    \begin{subfigure}[b]{0.45\textwidth}
        \centering
        \includegraphics[width=\textwidth]{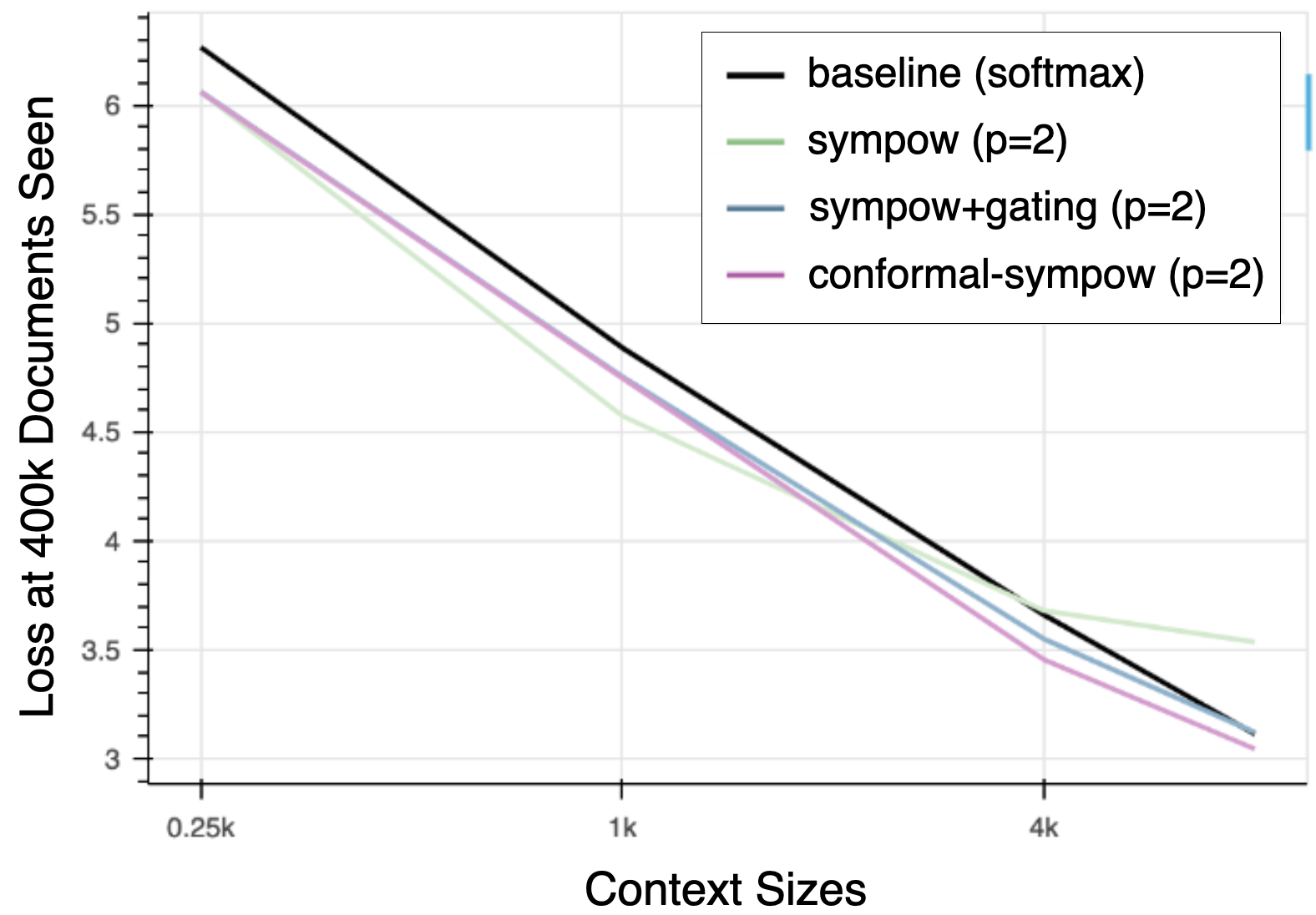}
        \caption*{(b) $p=2$}
    \end{subfigure}
    \caption{The training performance of sympow degrades relative to a softmax transformer baseline as the context size grows. Sympow with data-dependent gating (sympow+gating)
    closes this performance gap. Training performance further improves when adding data-dependent rotations with conformal-sympow. In contrast to sympow, conformal-sympow does not suffer from the degraded scaling of training context, either when (a) $p=4$ or (b) $p=2$.}  
    \vspace{-0.3cm}
\label{fig:conformal_sympow_vs_softmax_train}
\end{figure}

There are two main goals of our experiments: (1) evaluate sympow transformers on longer training and evaluation contexts than in prior work, and (2) determine the effectiveness of conformal transformations in closing any performance gap that arises at longer contexts.

For all experiments, we used the LongCrawl64 dataset~\citep{buckman2024longcrawl}, with a batch size of 524288 tokens. We used a transformer architecture that is similar to the 124M-parameter GPT-2 architecture but with rotary positional encoding and an additional layernorm after input embeddings. We performed optimization in bf$16$ mixed-precision using Adam with learning rate $.0006$ and no scheduling. Each model was trained on a node of $8$ H100s. Since the goal of our experiments is to understand the performance of sympow and conformal-sympow, we implemented the attention formulation of sympow (with quadratic cost) instead of the more efficient chunked formulation, which would require writing custom CUDA kernels. Additional results are in Appendix~\ref{app:experiments}.

\subsection{Are Sympow Transformers Robust to Context Scaling?} 

Prior work introducing sympow transformers demonstrated that sympow with a tractably small recurrent state (power $p=4$) closes the performance gap with a softmax transformer baseline at context size $4096$ on the LongCrawl64 dataset~\citep{buckman2024sympow}.
We ran experiments with the same setup but scaled the training context size from $1024$ up to $65,536$ when using sympow with $p=4$. 
We additionally ran experiments from training context size $256$ up to $8,192$ using sympow with $p=2$. Using $p=2$ results in a smaller recurrent state size ($39$ MB) than when using $p=4$ ($14$ GB). \citet{buckman2024sympow} consider state sizes under $80$ GB (the memory capacity of A100 and H100 GPUs) as tractable.

Our results in Figure~\ref{fig:conformal_sympow_vs_softmax_train} demonstrate that symmetric power transformers suffer degraded training performance compared to the softmax baseline as the training context size grows. While the effect is much more pronounced for $p=2$ model than $p=4$, solving poor scaling of the training context size will be important even for models with larger state sizes when scaling to millions of tokens in the training context. We further evaluate the ability of a trained model to make predictions at different evaluation context lengths, including ones longer than the ones used during training. In Figure~\ref{fig:sympow_eval}(a), we see that sympow does not generalize well beyond the training context size.

\begin{figure}[t] %
    \centering
    \begin{subfigure}[b]{0.45\textwidth}
        \centering
        \includegraphics[width=\textwidth]{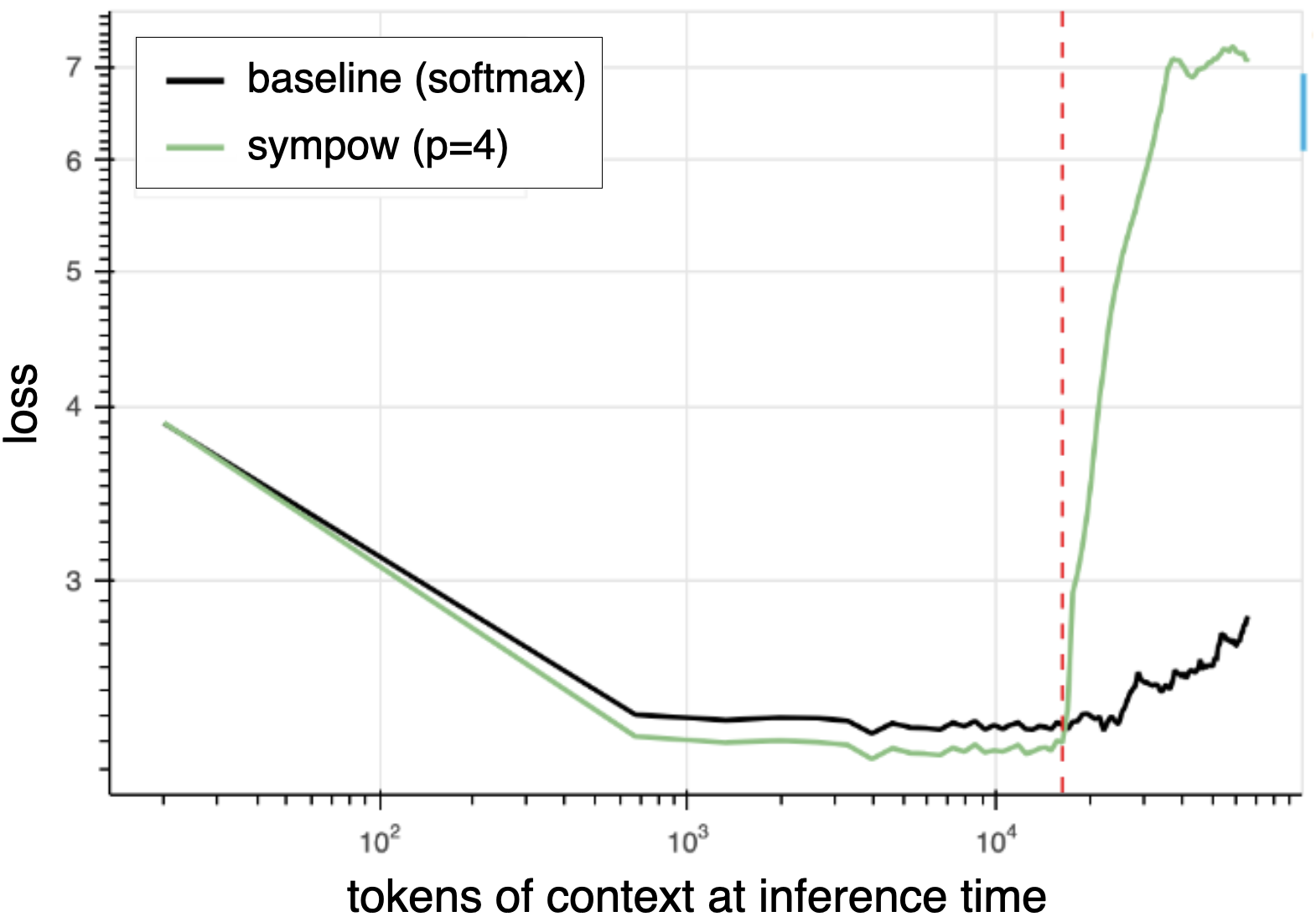}
        \caption*{(a)}
    \end{subfigure}
    \hfill
    \begin{subfigure}[b]{0.45\textwidth}
        \centering
        \includegraphics[width=\textwidth]{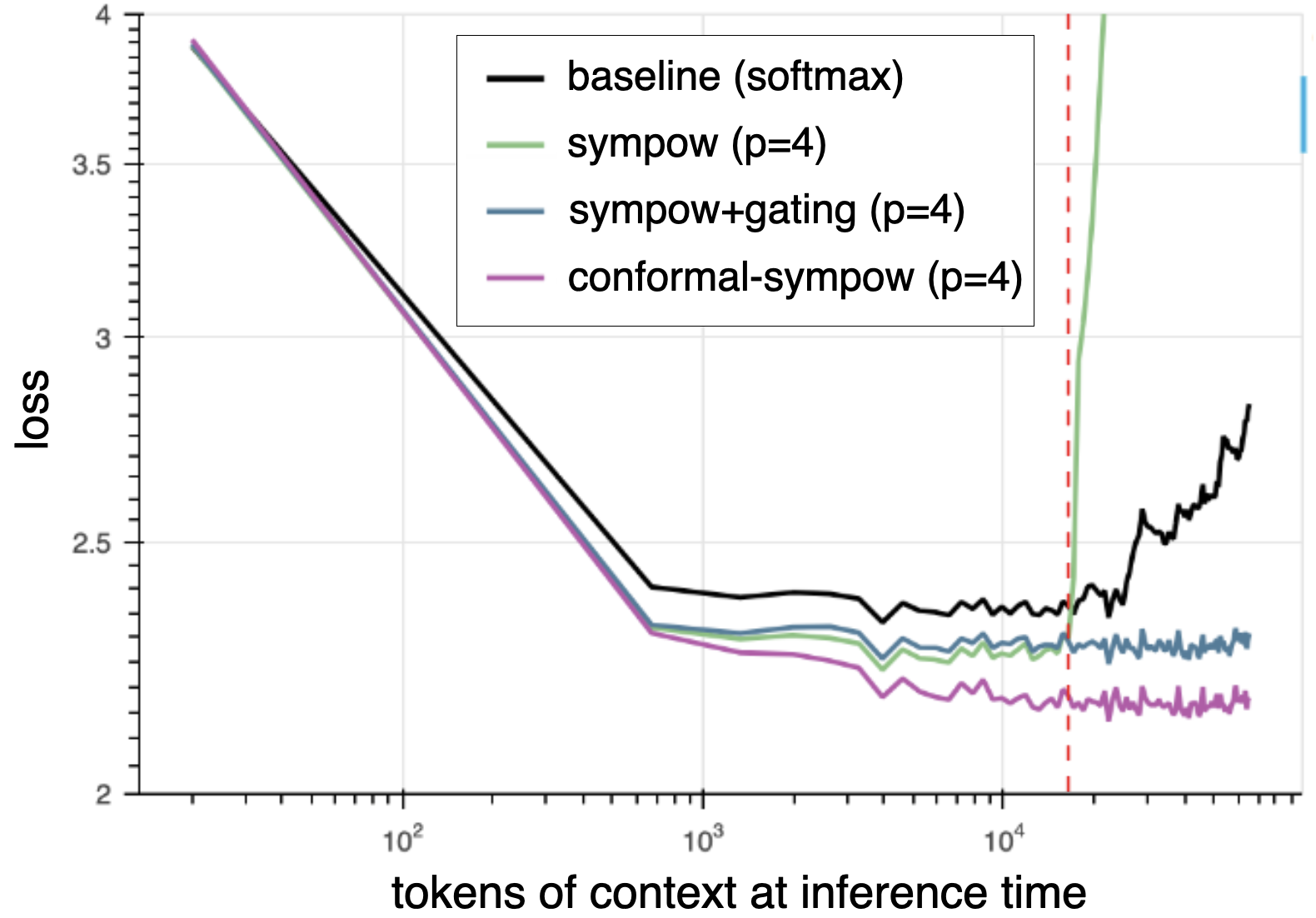}
        \caption*{(b)}
    \end{subfigure}
    \caption{Average loss at different evaluation context lengths ranging from $1$ to $65,536$ tokens. The training context size is $16,384$, indicated by the dashed red line. (a) Sympow is unable to generalize beyond the training context size of $16,384$. (b) Gated sympow generalizes well and conformal-sympow improves performance further.}
    \vspace{-0.3cm}
    \label{fig:sympow_eval}
\end{figure}

\subsection{Does Conformal-Sympow Close the Performance Gap?}
In this subsection, we determine whether conformal-sympow overcomes the limitations of sympow transformers. We implement conformal-sympow in its attention formulation which consists of parameters $W_\gamma$ and $W_\beta$ in each attention layer to learn data-dependent discounts and rotation scaling, as described in Sections~\ref{sec:gating} and~\ref{sec:rotary}. %
We repeat the same experiments as in the previous subsection, scaling the training context sizes from $1024$ to $65,536$ when using sympow with $p=4$ and from $256$ to $8,192$ when using sympow with $p=2$. 

Our results in Figure~\ref{fig:conformal_sympow_vs_softmax_train} demonstrate that conformal-sympow does not suffer from degraded performance compared to the softmax baseline as the training context size grows. When evaluating conformal-sympow on held-out data, we find that it generalizes beyond the training context size, as shown in Figure~\ref{fig:sympow_eval}(b). 

\subsubsection{How important is learning rotary embeddings?}

Our conformal-sympow architecture has two components: data-dependent gating and data-dependent rotary embeddings. Data-dependent gating has been shown in prior work to improve the performance of linear transformers~\citep{yang2023gated}. We isolate the addition of data-dependent rotary embeddings to determine whether scaling rotations is helpful in improving performance. Our results in Figure~\ref{fig:conformal_sympow_vs_softmax_train} and Figure~\ref{fig:sympow_eval}(b) demonstrate that conformal-sympow further improves both training and evaluation performance over sympow with only learned gating (sympow+gating).

\subsection{Related Work}\label{sec:related_work}
The challenge of scaling transformers to long training and evaluation contexts while maintaining computational efficiency has inspired a rich body of research. This section highlights relevant work on gating mechanisms and positional embeddings, both of which play key roles towards computationally efficient context scaling.

\textbf{Gated Linear Attention.} In architectures with linear attention, gating serves to erase information from the finite recurrent state, freeing up memory. Gating has been shown to improve performance of linear transformers, both at training time and at inference time when extrapolating to long evaluation contexts up to $10$ times the train context size~\citep{yang2023gated}. The design of gating mechanisms for linear attention should ensure compatibility with both attention-based and recurrent formulations, preserving the computational efficiency inherent to linear transformers. To this end, several types of gating have been proposed, including using a global, non-data-dependent decay factor~\citep{qin2023scaling}, per head non-data-dependent  decay factors~\citep{sun2023retentive}, scalar and per-head data-dependent decay factors~\citep{dao2024transformers,peng2021random,sun2024you,beck2024xlstm}, and a combination of a non-data-dependent matrix  with a data-dependent vector~\citep{gu2023mamba}. In this work, we adopt the scalar data-dependent gating mechanism introduced in~\citet{dao2024transformers}, as it strikes a balance between expressiveness and efficiency, making it particularly well-suited for integration with sympow transformers. Our results demonstrate that scalar data-dependent gating provides significant benefits to sympow transformers, just as it has for other linear transformer architectures.

\textbf{Positional Embeddings.} In this paper, we adopt rotary embeddings as positional encodings for sympow transformers, leveraging their intrinsic connection to conformal transformations. Rotary embeddings enhance extrapolation performance in softmax transformers compared to sinusoidal position embeddings, which are fixed vectors added to token embeddings before the first transformer layer~\citep{vaswani2017attention,su2024roformer,peng2021random}. An alternative approach, the T5 bias model, replaces rotary embeddings with a learned, shared bias added to each query-key score, conditioned on the distance between the query and key~\citep{raffel2020exploring}. While \citet{peng2021random} demonstrate that this method improves extrapolation performance, it incurs a higher computational cost. In contrast, Attention with Linear Biases (ALiBi) offers further enhancements in extrapolation performance while reducing computational overhead~\citep{peng2021random}. ALiBi biases query-key attention scores with a penalty that is proportional to the distance between the query and key. In Appendix~\ref{app:gating_alibi}, we demonstrate that ALiBi can be interpreted as a form of scalar gating, further underscoring the versatility of gating mechanisms in transformer architectures. Recent work studying the effect of positional embeddings on length generalization suggests that positional embeddings are not essential for inference time extrapolation when using softmax transformers~\citep{kazemnejad2024impact}.

\section{Conclusion}
We present the conformal-sympow transformer, which addresses the shortcomings of sympow transformers by achieving robust performance across scaled training and evaluation contexts. We further verify the compatibility of learned gating and rotary embeddings with sympow transformers in both the attention and recurrent formulations. While we focus on a specific instantiation of learned conformal transformations, we do not extensively explore alternative gating strategies---such as fixed, learned but data-independent, or data-dependent vector gating---nor the broader landscape of learning rotary embeddings. A more comprehensive investigation of these design choices is left for future work.

\section*{Acknowledgments}
We would like to thank Txus Bach and Jono Ridgway for insightful discussions and  Anmol Kagrecha and Wanqiao Xu for valuable feedback on an early version of the paper.

\bibliography{iclr2025_conference}
\bibliographystyle{iclr2025_conference}

\newpage

\appendix
\section{Appendix}
\subsection{Equivalence between Gating and ALiBi}\label{app:gating_alibi}

\citet{press2021train} proposes Attention with Linear Biases (ALiBi), a type of positional encoding that significantly improves the ability of softmax transformers to extrapolate to evaluation contexts longer than the training context size. ALiBi biases query-key attention scores with a penalty that is proportional to the distance between the query and key. We now show that ALiBi is equivalent to applying scalar gating.

In a softmax transformer, the attention scores are computed as
$$
A_{ij}  = \frac{B_{ij}}{\sum_{k=1}^i B_{ik}} \qquad B_{ij} = \text{exp}(Q_i^T K_j)
 \qquad \text{(softmax)}
$$
Recall that we refer to $A_{ij}$ as the attention scores and $B_{ij}$ as the pre-attention scores.

The pre-attention scores after applying ALiBi are
$$
B_{ij} = \text{exp}(Q_i^T K_j + m(j -i))
 \qquad \text{(softmax + ALiBi)}
$$
where $0 < m < 1$ is a head-specific value that is fixed before training. 

Note that 
$$\text{exp}(Q_i^T K_j + m(j - i)) = \gamma^{(i - j)} \text{exp}(Q_i^T K_j)$$
where $\gamma = \text{exp}(-m)$. Since $-m < 0$, $0 < \gamma < 1$. Thus, the application of ALiBi is equivalent to applying scalar gating.

\subsection{Derivations}\label{app:derivations}

\begin{proposition*}
Given a vector of angles $\theta = (\theta_1, \theta_2, ..., \theta_{\frac{d}{2}}$, the block-diagonal rotation matrix
\[
R(\theta) = \begin{pmatrix}
\cos(\theta_1) & -\sin(\theta_1) & \cdots & 0 & 0 \\
\sin(\theta_1) & \cos(\theta_1)  & \cdots & 0 & 0 \\
\vdots & \vdots & \ddots & \vdots & \vdots \\
0 & 0 & \cdots & \cos(\theta_{d/2}) & -\sin(\theta_{d/2}) \\
0 & 0 & \cdots & \sin(\theta_{d/2}) & \cos(\theta_{d/2})
\end{pmatrix},
\]
satisfies \( R(\theta)^k = R(k \theta) \) for any positive integer \( k \).
\end{proposition*}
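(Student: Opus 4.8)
The plan is to exploit the block-diagonal structure of $R(\theta)$ to reduce the claim to a statement about a single $2 \times 2$ rotation block, and then to prove that one-block statement via the angle-addition identities (equivalently, by identifying $\R^2$ with $\sC$). Writing $R(\theta) = \mathrm{diag}(R_1, \ldots, R_{d/2})$ with each $R_j = R_j(\theta_j)$ the $2\times 2$ rotation by angle $\theta_j$, the whole argument is elementary, so I expect no genuine obstacle; the only point demanding care is the bookkeeping that powering commutes with the block decomposition.

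First I would record that since two block-diagonal matrices sharing the same block partition multiply block-by-block with no cross terms, the product $R(\theta)^k$ is again block diagonal with $j$-th block equal to $R_j(\theta_j)^k$. This is the step requiring the most care: I would verify from the definition of matrix multiplication that, because all entries outside the diagonal blocks vanish, the off-block entries never contribute to any partial sum, so that taking the $k$-th power acts independently within each block.

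Next I would prove the single-block identity $R_j(\theta_j)^k = R_j(k\theta_j)$. The cleanest route is induction on $k$: the base case $k=1$ is immediate, and the inductive step reduces to $R_j(\alpha) R_j(\beta) = R_j(\alpha+\beta)$, which is exactly the sine and cosine angle-addition formulas applied to the four entries of the $2\times 2$ product. Alternatively, identifying $\R^2 \cong \sC$ sends $R_j(\theta_j)$ to multiplication by $e^{i\theta_j}$, whence $R_j(\theta_j)^k$ corresponds to $e^{ik\theta_j}$, i.e.\ to $R_j(k\theta_j)$; I would likely present the inductive version to keep everything inside real matrices.

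Finally I would reassemble the blocks: the block-diagonal matrix whose $j$-th block is $R_j(k\theta_j)$ is precisely $R(k\theta)$, since scaling the angle vector $\theta = (\theta_1, \ldots, \theta_{d/2})$ by $k$ scales each component to $k\theta_j$. Combining this with the first step yields $R(\theta)^k = R(k\theta)$, completing the proof.
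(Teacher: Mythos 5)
Your proposal is correct and follows essentially the same route as the paper's proof: an induction on $k$ for a single $2\times 2$ block, with the inductive step resting on the angle-addition identity $R_j(\alpha)R_j(\beta) = R_j(\alpha+\beta)$, followed by the observation that powering a block-diagonal matrix acts block-by-block, so the blocks $R_j(k\theta_j)$ reassemble into $R(k\theta)$. The only differences are cosmetic --- you treat the block-diagonal reduction first and are slightly more explicit about justifying it and the angle-addition step --- so there is nothing further to reconcile.
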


\begin{proof}
We prove this statement by induction on \( k \) for a single \( 2 \times 2 \) rotation matrix \( R(\theta_i) \), and then extend it to the full block diagonal matrix.

For the base case \( k = 1 \), we have:
\[
R(\theta_i)^1 = R(\theta_i),
\]
which is equivalent to \( R(1 \cdot \theta_i) = R(\theta_i) \). Thus, the base case holds.

Assume that for some positive integer \( k \), the property holds:
\[
R(\theta_i)^k = R(k \theta_i).
\]

We need to show that \( R(\theta_i)^{k+1} = R((k+1)\theta_i) \). Using the definition of matrix exponentiation:
\[
R(\theta_i)^{k+1} = R(\theta_i) R(\theta_i)^k.
\]

By the inductive hypothesis, \( R(\theta_i)^k = R(k\theta_i) \). Substituting this:
\[
R(\theta_i)^{k+1} = R(\theta_i) R(k \theta_i).
\]

The product of two rotation matrices corresponds to a rotation by the sum of their angles. Therefore:
\[
R(\theta_i) R(k \theta_i) = R(\theta_i + (k \theta_i)) = R((k+1)\theta_i).
\]

Thus, \( R(\theta_i)^{k+1} = R((k+1)\theta_i) \), completing the inductive step. By induction, the property holds for all \( k \geq 1 \).

\textbf{Extension to Block Diagonal Matrices}

Consider the block diagonal matrix \( R(\theta) \), where:
\[
R(\theta) = \begin{pmatrix}
R(\theta_1) & 0 & \cdots & 0 \\
0 & R(\theta_2) & \cdots & 0 \\
\vdots & \vdots & \ddots & \vdots \\
0 & 0 & \cdots & R(\theta_{d/2})
\end{pmatrix}.
\]

Since each block \( R(\theta_i) \) is independent of the others, the \( k \)-th power of \( R(\theta) \) is the block diagonal matrix with each block raised to the \( k \)-th power:
\[
R(\theta)^k = \begin{pmatrix}
R(\theta_1)^k & 0 & \cdots & 0 \\
0 & R(\theta_2)^k & \cdots & 0 \\
\vdots & \vdots & \ddots & \vdots \\
0 & 0 & \cdots & R(\theta_{d/2})^k
\end{pmatrix}.
\]

Using the result for a single rotation matrix, \( R(\theta_i)^k = R(k\theta_i) \), we get:
\[
R(\theta)^k = \begin{pmatrix}
R(k \theta_1) & 0 & \cdots & 0 \\
0 & R(k \theta_2) & \cdots & 0 \\
\vdots & \vdots & \ddots & \vdots \\
0 & 0 & \cdots & R(k \theta_{d/2})
\end{pmatrix}.
\]

This is equivalent to the block diagonal matrix \( R(k\theta) \), where \( k\theta = (k\theta_1, k\theta_2, \dots, k\theta_{d/2}) \).

Thus, by induction and the block diagonal structure, \( R(\theta)^k = R(k\theta) \) for any positive integer \( k \).
\end{proof}

\newpage

\rotaryequivalence*

\begin{proof}

We begin by writing the output $Y_i$ at time step $i$ in the attention formulation.
For notational simplicity, let $C_i = \sum_{k=1}^i (Q_i'^T K_k')^p = \sum_{k=1}^i \phi^p(Q_i')^T \phi^p(K_k')$.

\begin{align*}
Y_i &= \sum_{j=1}^i \frac{\left( Q_i^T (R^{i-j})^T K_j\right) ^p V_j}{C_i} \\
    &= \sum_{j=1}^i \frac{\left( Q_i^T (R^i)^T R^j K_j\right)^p V_j}{C_i} \\
    &= \sum_{j=1}^i \frac{\left(Q_i^T R(\mu_i)^T R(\mu_j) K_j\right)^p V_j}{C_i} \\
    &= \sum_{j=1}^i \frac{\left((R(\mu_i) Q_i)^T R(\mu_j) K_j\right)^p V_j}{C_i} \\
    &= \sum_{j=1}^i \frac{\left(\phi^p(R(\mu_i) Q_i)^T \phi^p(R(\mu_j) K_j)\right) V_j}{C_i} \\
    &= \sum_{j=1}^i \frac{V_j \phi^p(R(\mu_j) K_j)^T \phi^p(R(\mu_i) Q_i)}{C_i} \\
    &= \frac{\left( \sum_{j=1}^i V_j \phi^p(R(\mu_j) K_j)^T \right)  \phi^p(R(\mu_i) Q_i) }{C_i} \\
    &= \frac{S_i  \phi^p(R(\mu_i) Q_i) }{Z_i \phi^p(R(\mu_i) Q_i)}
\end{align*}

which is the recurrent formulation of the output $Y_i$. The last line above uses the fact that 

$$
\sum_{j=1}^i V_j \phi^p(R(\mu_j) K_j)^T = S_i 
$$

and

\begin{align*}
C_i &= \sum_{k=1}^i \phi^p(Q_i')^T \phi^p(K_k') \\
    &= \sum_{k=1}^i \phi^p(R(\mu_i) Q_i)^T \phi^p(R(\mu_k) K_k) \\
    &= \sum_{k=1}^i \phi^p(R(\mu_k) K_k)^T \phi^p(R(\mu_i) Q_i) \\
    &= Z_i \phi^p(R(\mu_i) Q_i)
\end{align*}
\end{proof}

\newpage

\gatingequivalence*
\begin{proof}
We begin by writing the output $Y_i$ at time step $i$ in the attention formulation.
For notational simplicity, let $C_i = \sum_{k=1}^i b_{ik} (Q_i'^T K_k')^p = \sum_{k=1}^i b_{ik} \phi^p(Q_i')^T \phi^p(K_k')$.

\begin{align*}
    Y_i &= \sum_{j=1}^i \frac{b_{ij} \left(Q_i'^T K'_j\right)^p V_j}{C_i} \\
    &= \sum_{j=1}^i \frac{b_{ij} \left(\phi^p(Q'_i)^T \phi^p(K'_j)\right) V_j}{C_i} \\
    &= \sum_{j=1}^i \frac{b_{ij} V_j \phi^p(K'_j)^T \phi^p(Q'_i)}{C_i} \\
    &= \frac{\left( \sum_{j=1}^i b_{ij} V_j \phi^p(K'_j)^T \right)  \phi^p(Q'_i) }{C_i} \\
    &= \frac{S_i  \phi^p(Q'_i) }{Z_i \phi^p(Q'_i)}
\end{align*}

which is the recurrent formulation of the output $Y_i$. The last line above uses the fact that $\sum_{j=1}^i b_{ij} V_j \phi^p(K'_j)^T = S_i$ and

\begin{align*}
C_i &= \sum_{k=1}^i b_{ik} \phi^p(Q_i')^T \phi^p(K_k') \\
    &= \sum_{k=1}^i b_{ik} \phi^p(K_k')^T \phi^p(Q_i') \\
    &= Z_i \phi^p(Q'_i)
\end{align*}

We prove that $S_i = \sum_{j=1}^i b_{ij} V_j \phi^p(K'_j)^T$ by induction.  As the base case, note that 
$S_1 =  V_1 \phi^p(K'_1)^T$. For the inductive step, suppose that for $k > 1$, $S_k = \sum_{j=1}^k b_{kj} V_j \phi^p(K'_j)^T$.
Then 

\begin{align*}
S_{k+1} &= \gamma_{k+1} S_k + V_{k+1}\phi^p(K'_{k+1})^T \\
        &= \gamma_{k+1} \left( \sum_{j=1}^k b_{kj} V_j \phi^p(K'_j)^T \right) + V_{k+1}\phi^p(K'_{k+1})^T \\
        &= \left( \sum_{j=1}^k b_{(k+1)j} V_j \phi^p(K'_j)^T \right) + V_{k+1}\phi^p(K'_{k+1})^T \\
        &= \sum_{j=1}^{k+1} b_{(k+1)j} V_j \phi^p(K'_j)^T
\end{align*}

This completes the inductive step.
\end{proof}

\newpage

\conformalequivalence*
\begin{proof}
Note that the symmetric power embedding function is equivalent to applying the tensor product and removing redundant information resulting from symmetry. For mathematical simplicity, we prove the corresponding result for which the embedding function is the repeated tensor product $\otimes^p$. The corresponding proposition is stated below.

Let $V$ be a vector space with dimension $d$ and basis vectors $\{ v_1, v_2, \dots, v_d \}$, and let $P \in \mathbb{R}^{d \times d}$ be a rotation matrix. Define the linear map $\bar{P} : V^{\otimes p} \to V^{\otimes p}$ (the tensor product of $p$ copies of $V$) by its action on the basis elements as
$$
\bar{P}(v_{i_1} \otimes v_{i_2} \otimes \dots \otimes v_{i_p}) = (P v_{i_1}) \otimes (P v_{i_2}) \otimes \dots \otimes (P v_{i_p}) \quad \text{for all } i_1, i_2, \dots, i_p.
$$
Then $\bar{P} \in \mathbb{R}^{d^p \times d^p}$ is a rotation matrix.

We need to show that $\bar{P}$ satisfies the properties of a rotation matrix, namely:
1. $\bar{P}$ is an orthogonal matrix, i.e., $\bar{P}^T \bar{P} = I$.
2. $\det(\bar{P}) = 1$, so that $\bar{P}$ represents a proper rotation.

\textbf{Step 1: Orthogonality of $\bar{P}$.}

Since $\bar{P}$ is defined by its action on the basis elements of $V^{\otimes k}$ as
$$
\bar{P}(v_{i_1} \otimes v_{i_2} \otimes \dots \otimes v_{i_p}) = (P v_{i_1}) \otimes (P v_{i_2}) \otimes \dots \otimes (P v_{i_p}),
$$
and $P$ is an orthogonal matrix, i.e., $P^T P = I_d$, where $I_d$ is the identity matrix in $\mathbb{R}^{d}$, we need to verify that $\bar{P}$ preserves the inner product in the tensor product space.
The inner product of two basis elements $v_{i_1} \otimes v_{i_2} \otimes \dots \otimes v_{i_p}$ and $v_{j_1} \otimes v_{j_2} \otimes \dots \otimes v_{j_p}$ in $V^{\otimes p}$ is given by:
$$
\langle v_{i_1} \otimes v_{i_2} \otimes \dots \otimes v_{i_p}, v_{j_1} \otimes v_{j_2} \otimes \dots \otimes v_{j_k} \rangle = \prod_{p=1}^{p} \langle v_{i_p}, v_{j_p} \rangle.
$$
Applying $\bar{P}$ to this inner product, we get:
$$
\langle \bar{P}(v_{i_1} \otimes \dots \otimes v_{i_p}), \bar{P}(v_{j_1} \otimes \dots \otimes v_{j_p}) \rangle = \prod_{p=1}^{p} \langle P v_{i_p}, P v_{j_p} \rangle.
$$
Since $P$ is orthogonal, we have $\langle P v_{i_p}, P v_{j_p} \rangle = \langle v_{i_p}, v_{j_p} \rangle$ for each $p$. Therefore, $\bar{P}$ preserves the inner product, meaning that $\bar{P}$ is an orthogonal matrix, i.e., $\bar{P}^T \bar{P} = I_{d^p}$.

\textbf{Step 2: Determinant of $\bar{P}$.}

Next, we show that $\det(\bar{P}) = 1$. Since $\bar{P} = P \otimes P \otimes \dots \otimes P$ (a $p$-fold tensor product of $P$ with itself), we can use the property of the determinant for tensor products of matrices. Specifically, if $A$ and $B$ are square matrices, then:
$$
\det(A \otimes B) = \det(A)^{\dim(B)} \det(B)^{\dim(A)}.
$$
In our case, since $\bar{P} = P \otimes P \otimes \dots \otimes P$, we have:
$$
\det(\bar{P}) = \det(P)^{p \cdot d}.
$$
Since $P$ is a rotation matrix in $\mathbb{R}^d$, we know that $\det(P) = 1$. Therefore:
$$
\det(\bar{P}) = 1^{p \cdot d} = 1.
$$
Thus, $\bar{P}$ is a proper rotation matrix.
\end{proof}

\newpage
\subsection{Experiments}\label{app:experiments}

\subsubsection{Training curves}
We present full training curves of sympow, sympow with gating, and conformal-sympow with $p=4$ (Figure~\ref{fig:training_curves_p=4}) and $p=2$ (Figure~\ref{fig:training_curves_p=2}). We can see that both sympow+gating and conformal-sympow improve optimization over sympow throughout training.

\subsection{Compute Overhead of Data-Dependent Gating and Data-Dependent Rotary Embeddings}
Adding data-dependent gating increases the total parameter count by $0.09\%$. Together, adding data-dependent gating and data-dependent rotary embeddings increases the parameter count by $0.18\%$. 

Below, we discuss the compute overhead of adding data-dependent gating and data-dependent rotary embeddings in a single attention head under the recurrent formulation of conformal-sympow. 

To understand the compute overhead of adding data-dependent gating, we first compute the number of operations in a single recurrent state update. Here, we assume a batch size of 1. Consider a recurrent state with dimension $O(d \times D)$. Without gating, the recurrent state update has $O(dD)$ operations. Gating computes a single scalar discount value by taking a dot product between the input embedding $X_i \in \mathbb{R}^d$ and the parameter $W_\gamma$, which adds $O(d)$ operations. (Recall from Section~\ref{sec:sympow_detailed} that $d < D < d^p$). Multiplying $\gamma$ with the recurrent state adds $O(dD)$ operations. The total number of operations is still $O(dD)$ which is significantly cheaper than the compute cost of the attention formulation. The attention formulation compute cost is quadratic in the sequence length, which is much larger than $dD$ (when $p$ is small as in this paper) when the sequence length is large.

To understand the compute overhead of adding data-dependent rotary embeddings, note that computing the scalar $\beta_i$ by which we scale the fixed rotation rates $\theta$ involves taking a dot product between the input embedding $X_i$ and $W_\beta$, which adds $O(d)$ operations. Adding the resulting scalar to the cumulative scalar adds $O(1)$ operations. Finally, multiplying with $\theta$ adds $O(d)$ operations. The rest of the computation is the same as without data-dependent rotary embeddings. The total number of operations is again still $O(dD)$.

\begin{figure}[t] %
    \centering
    \begin{subfigure}[b]{0.45\textwidth}
        \centering
        \includegraphics[width=\textwidth]{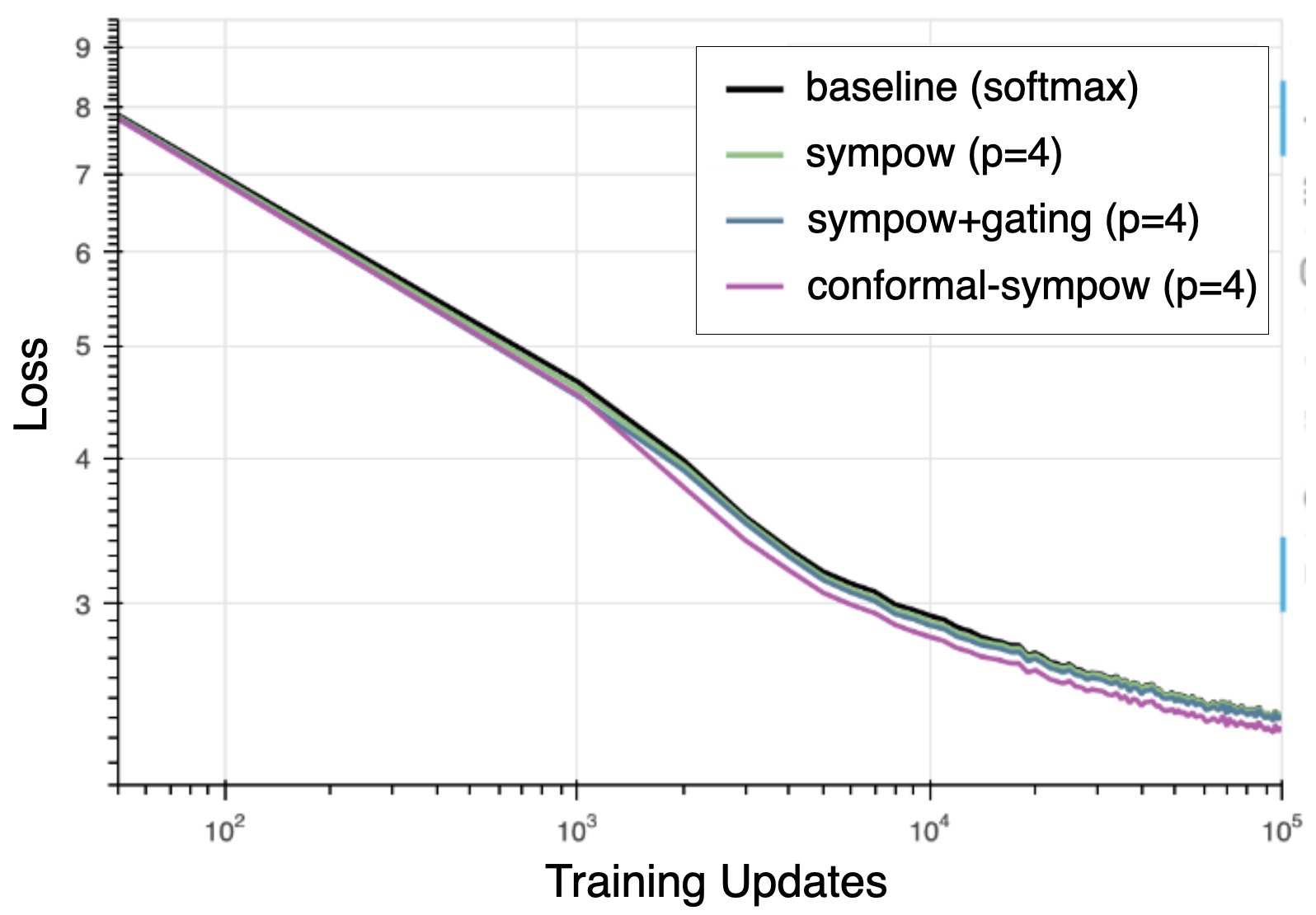}
        \caption*{(a) context size = $1024$}
    \end{subfigure}
    \hfill
    \begin{subfigure}[b]{0.45\textwidth}
        \centering
        \includegraphics[width=\textwidth]{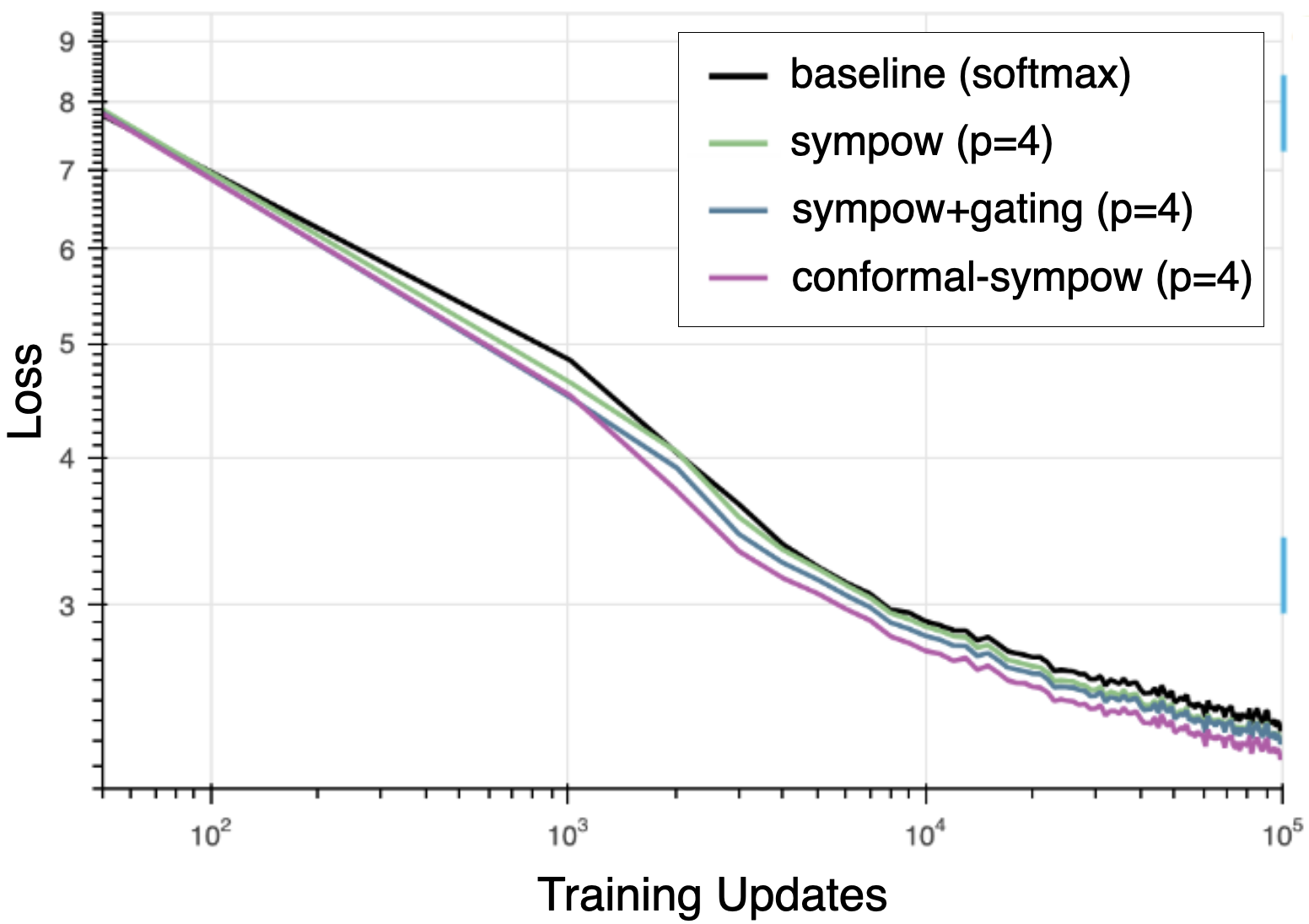}
        \caption*{(b) context size = $4096$}
    \end{subfigure}
    
    \vspace{0.5cm} %
    
    \begin{subfigure}[b]{0.45\textwidth}
        \centering
        \includegraphics[width=\textwidth]{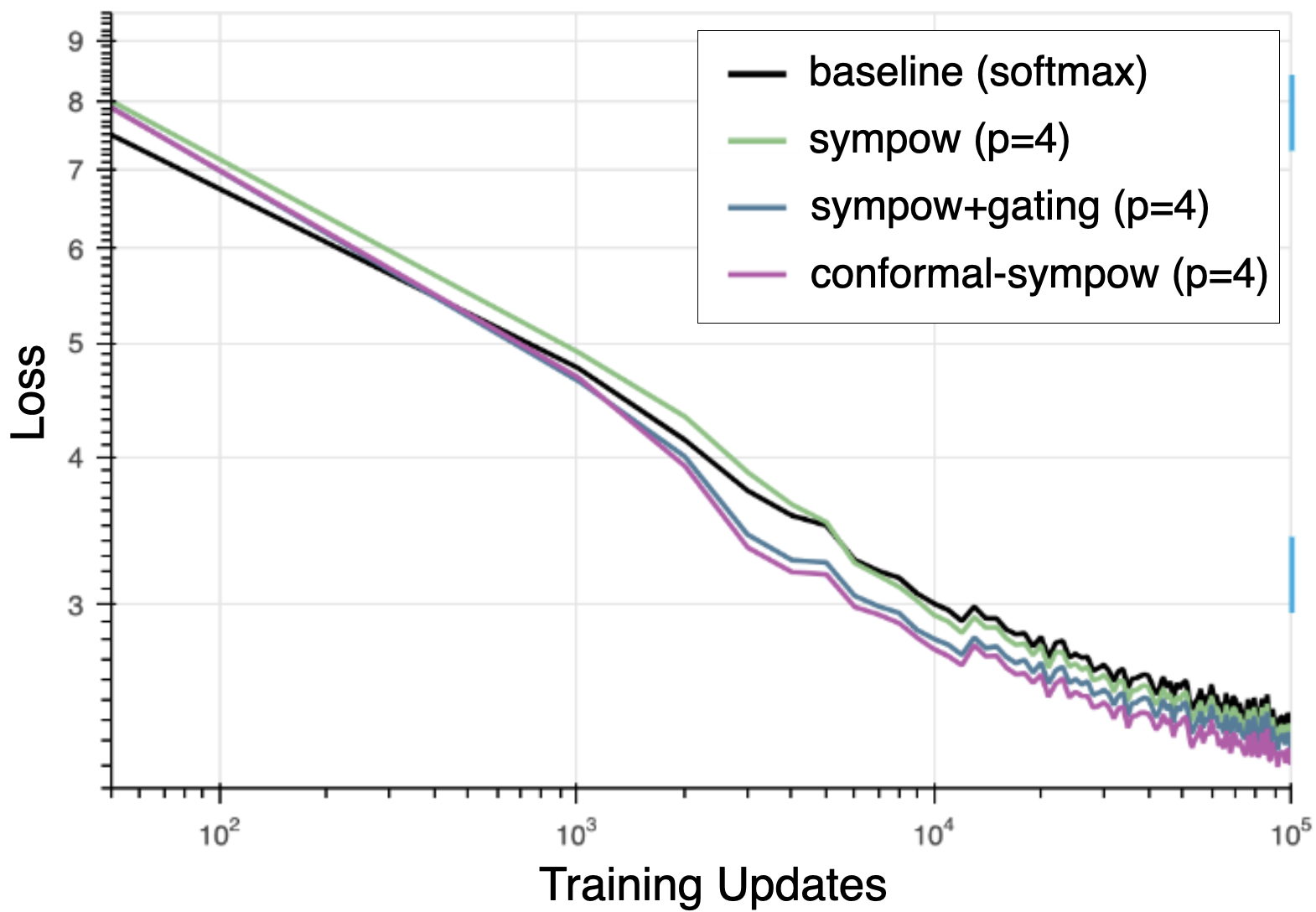}
        \caption*{(c) context size = $16384$}
    \end{subfigure}
    \hfill
    \begin{subfigure}[b]{0.45\textwidth}
        \centering
        \includegraphics[width=\textwidth]{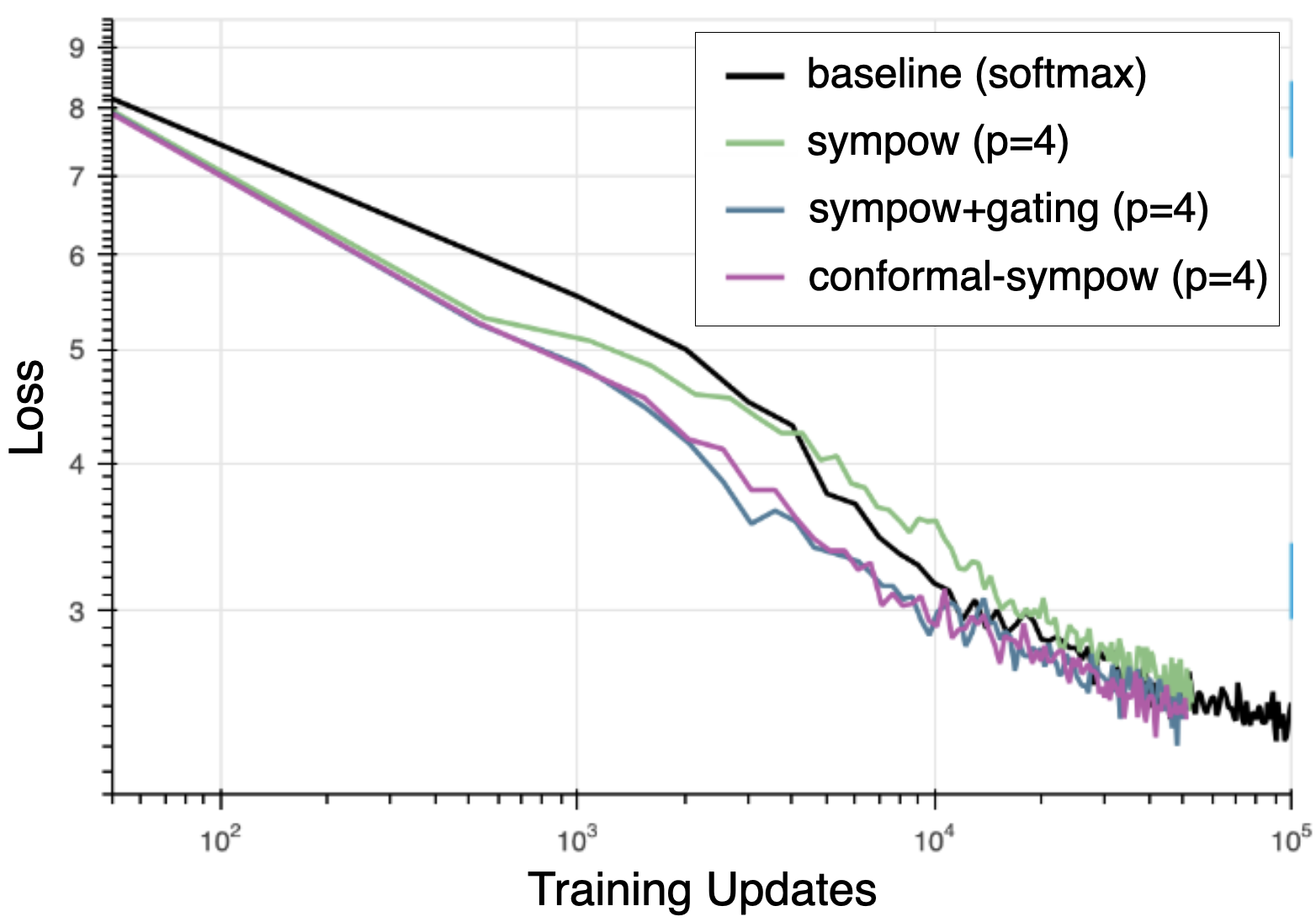}
        \caption*{(d) context size = $65536$ }
    \end{subfigure}

    \caption{Training curves for sympow, sympow+gating, and conformal-sympow with $p=4$ at different training context lengths. We can see that both sympow+gating and conformal-sympow improve optimization over sympow throughout training.}
    \vspace{-0.3cm}
    \label{fig:training_curves_p=4}
\end{figure}

\begin{figure}[t] %
    \centering
    \begin{subfigure}[b]{0.45\textwidth}
        \centering
        \includegraphics[width=\textwidth]{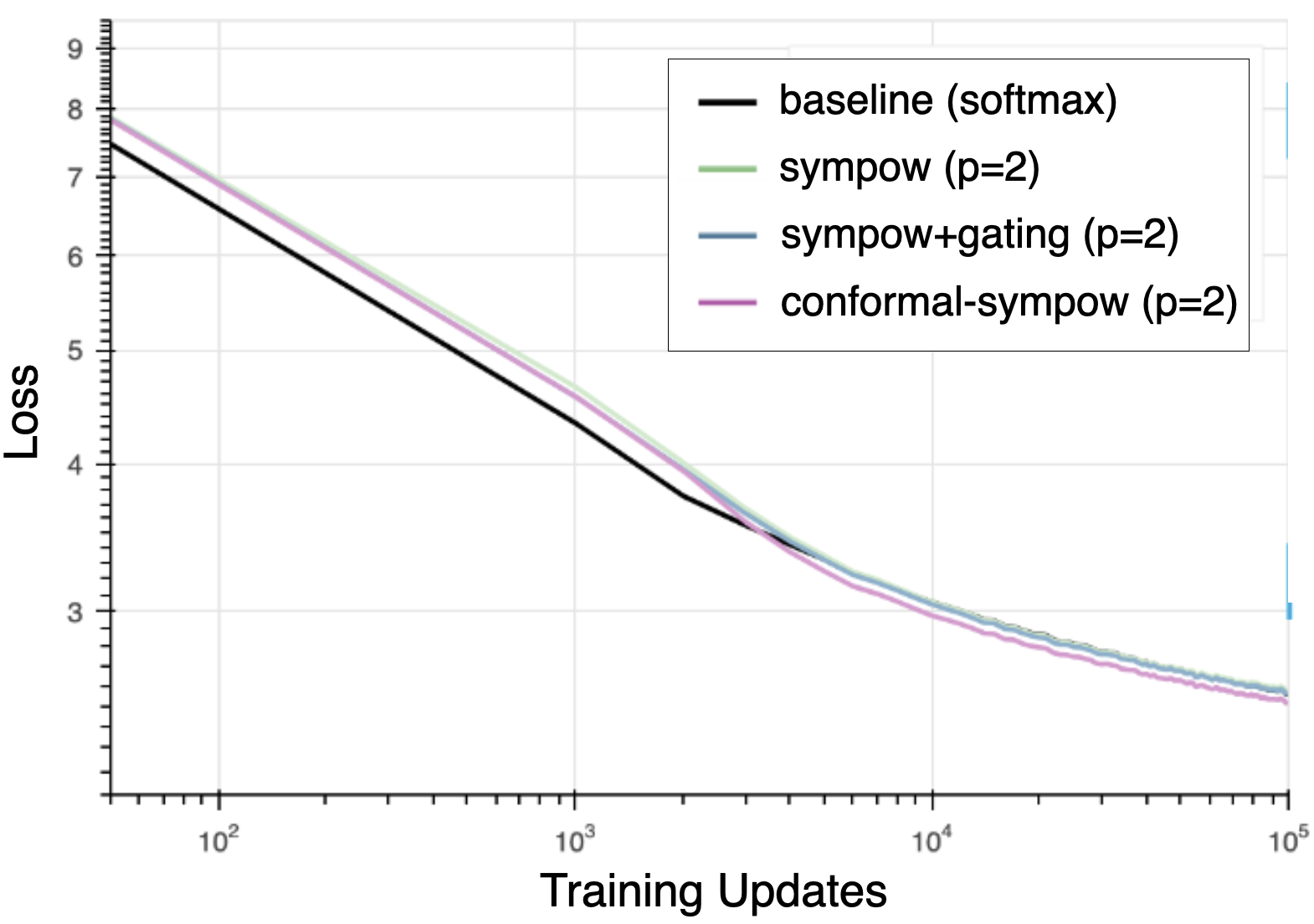}
        \caption*{(a) context size = $256$}
    \end{subfigure}
    \hfill
    \begin{subfigure}[b]{0.45\textwidth}
        \centering
        \includegraphics[width=\textwidth]{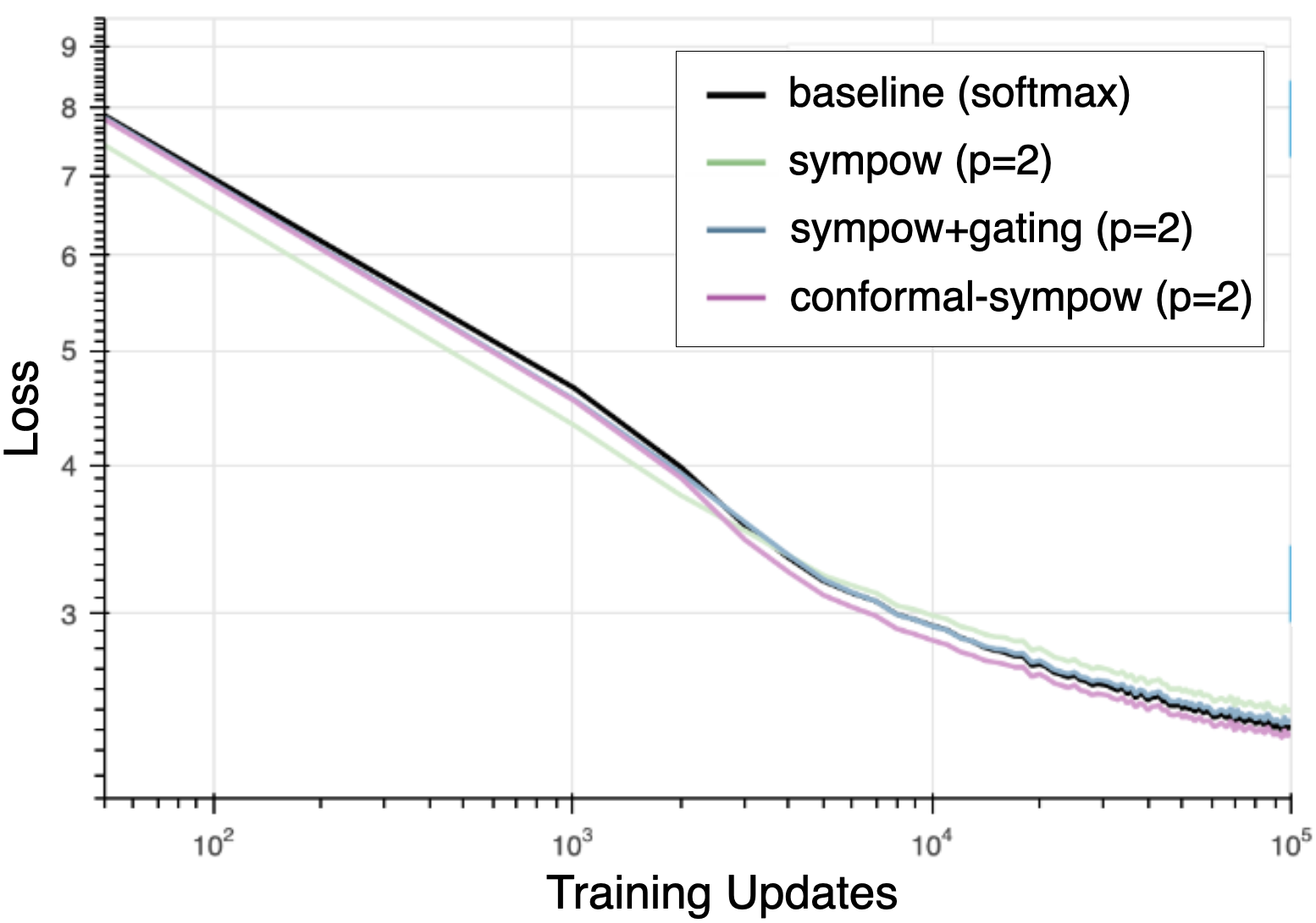}
        \caption*{(b) context size = $1024$}
    \end{subfigure}
    
    \vspace{0.5cm} %
    
    \begin{subfigure}[b]{0.45\textwidth}
        \centering
        \includegraphics[width=\textwidth]{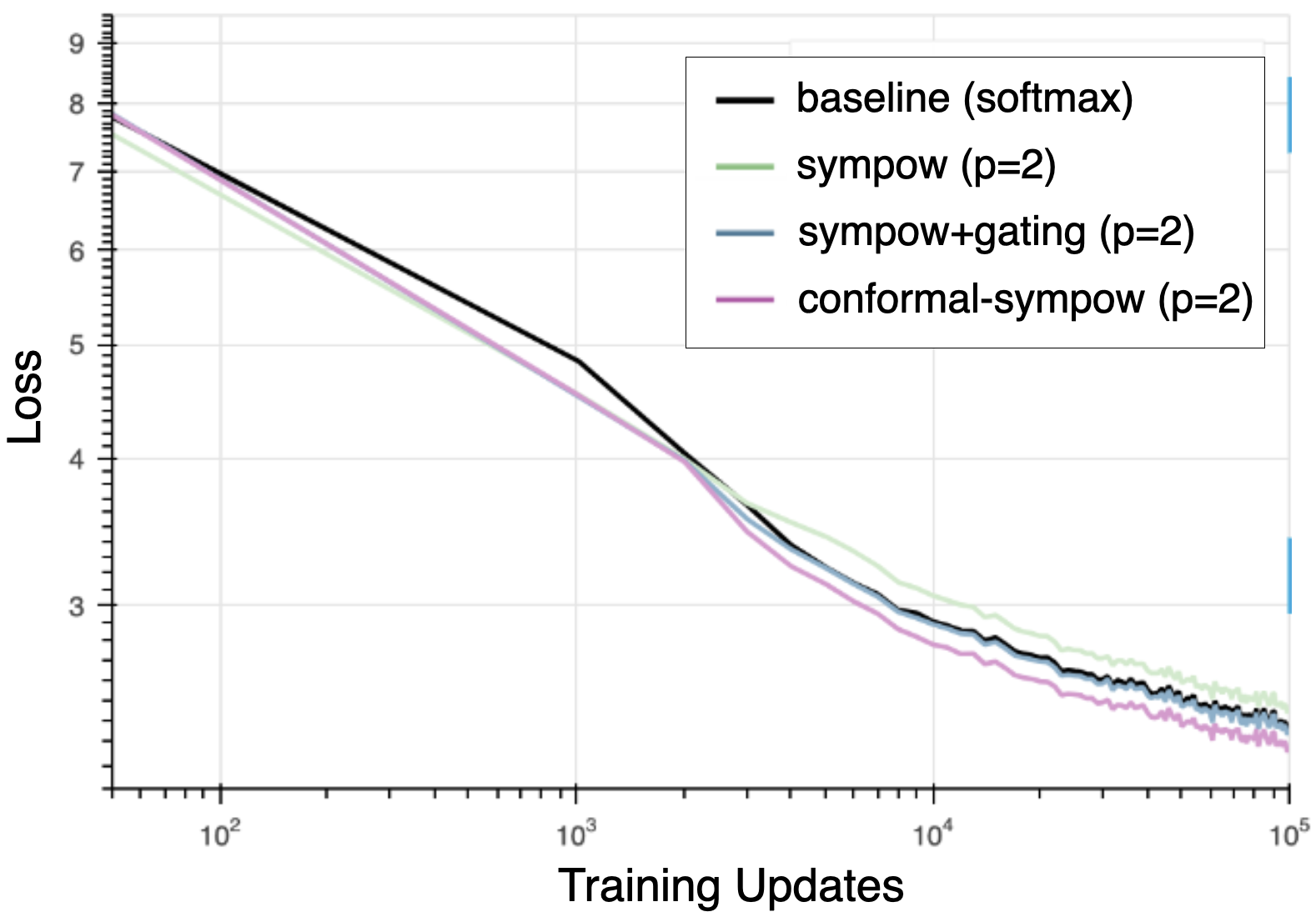}
        \caption*{(c) context size = $4096$}
    \end{subfigure}
    \hfill
    \begin{subfigure}[b]{0.45\textwidth}
        \centering
        \includegraphics[width=\textwidth]{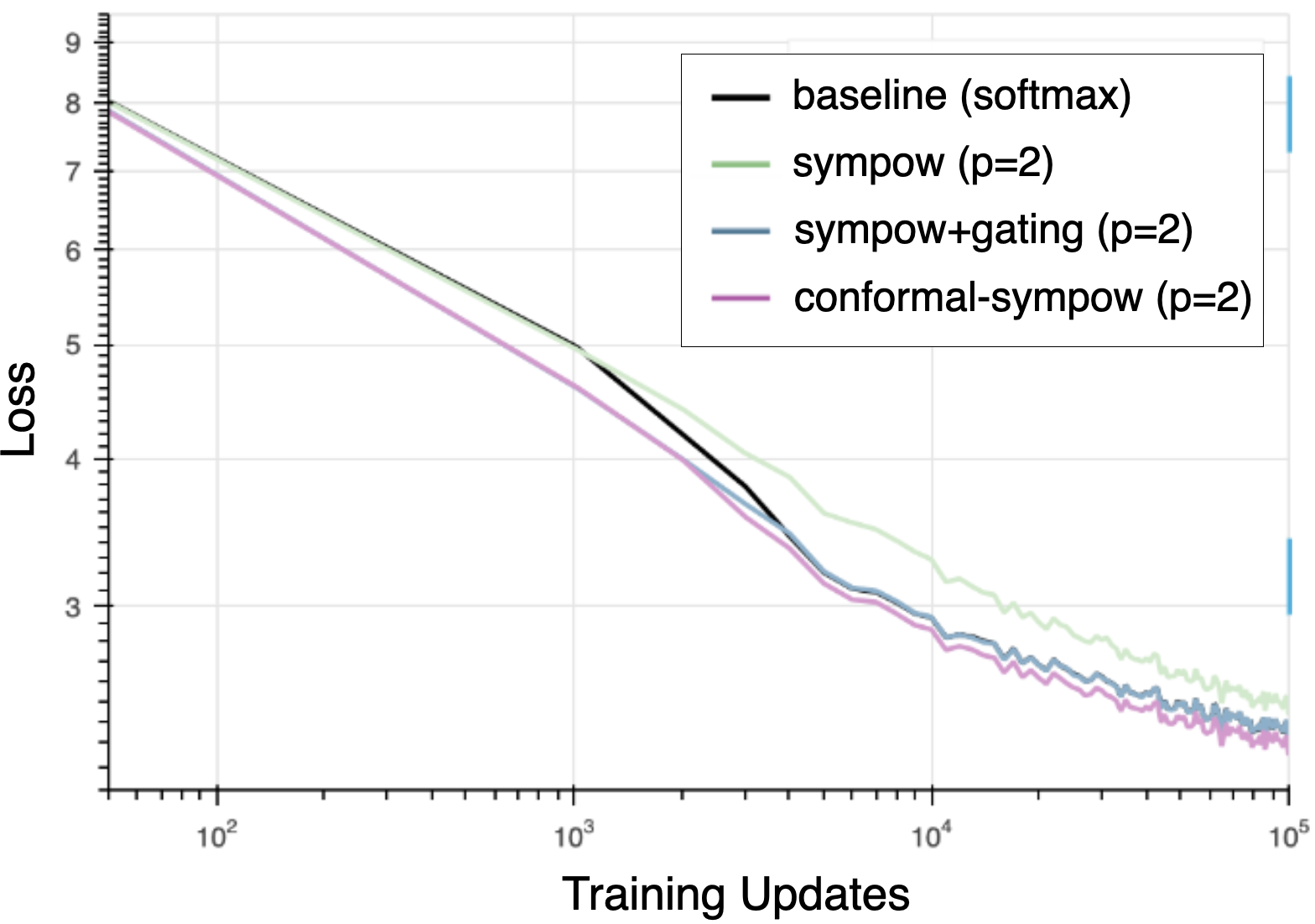}
        \caption*{(d) context size = $8192$ }
    \end{subfigure}

    \caption{Training curves for sympow, sympow+gating, and conformal-sympow with $p=2$ at different training context lengths. We can see that both sympow+gating and conformal-sympow improve optimization over sympow throughout training.}
    \vspace{-0.3cm}
    \label{fig:training_curves_p=2}
\end{figure}

\end{document}